%% file: neurips_2026.tex
\documentclass{article}

    \PassOptionsToPackage{numbers, compress}{natbib}
 \usepackage[preprint]{neurips_2026}

\usepackage[utf8]{inputenc} % allow utf-8 input
\usepackage[T1]{fontenc}    % use 8-bit T1 fonts
\usepackage{hyperref}       % hyperlinks
\usepackage{url}            % simple URL typesetting
\usepackage{booktabs}       % professional-quality tables
\usepackage{amsfonts}       % blackboard math symbols
\usepackage{nicefrac}       % compact symbols for 1/2, etc.
\usepackage{microtype}      % microtypography
\usepackage{xcolor}         % colors
\usepackage{graphicx}
\usepackage{wrapfig}
\usepackage{amsmath}
\usepackage{amssymb}
\usepackage{mathtools}
\usepackage{amsthm}
\usepackage{enumitem}

\usepackage[capitalize,noabbrev]{cleveref}

\crefname{section}{\S}{\S\S}
\crefname{figure}{Fig.}{Figs.}
\crefname{proposition}{Prop.}{Props.}
\Crefname{proposition}{Proposition}{Propositions}
\crefname{appendix}{Appx.}{Appxs.}
\crefname{theorem}{Thm.}{Thms.}
\crefname{definition}{Defn.}{Defns.}
\crefname{cor}{Corollary}{Corollaries}
\crefname{lem}{Lemma}{Lemmas}
\crefname{table}{Tab.}{Tabs.}
\crefname{assum}{Assum.}{Assums.}
\usepackage{aliascnt}

\theoremstyle{plain}
\newtheorem{theorem}{Theorem}[section]
\newaliascnt{proposition}{theorem}
\newtheorem{proposition}[proposition]{Proposition}
\aliascntresetthe{proposition}

\theoremstyle{definition}

\newtheorem{assumption}[theorem]{Assumption}
\theoremstyle{remark}

\usepackage[textsize=tiny]{todonotes}

\usepackage[most]{tcolorbox}
\newtcolorbox[auto counter]{counterexample}[1][]{%
  enhanced,
  breakable,
  colback=black!5,
  colframe=black!85,
  boxrule=1.25pt,
  arc=3pt,
  left=2mm,right=2mm,top=1mm,bottom=1mm,
  before skip=10pt, after skip=10pt,
  title={Counterexample~\thetcbcounter},
  colbacktitle=black!85,
  coltitle=white,
  fonttitle=\bfseries\small,
  attach boxed title to top left={yshift=-1.2mm, xshift=2mm},
  boxed title style={
    enhanced,
    arc=3pt,
    top=0.5mm, bottom=0.5mm, left=1mm, right=1mm,
    boxrule=0pt,
    interior engine=empty,
  },
  #1
}

\newtcolorbox[auto counter]{observation}[1][]{%
  enhanced,
  breakable,
  colback=blue!5,
  colframe=blue!85,
  boxrule=1pt,
  arc=2pt,
  left=1.5mm,right=1.5mm,top=0.3mm,bottom=0.3mm,
  before skip=4pt, after skip=4pt,
  title={Observation~\thetcbcounter},
  colbacktitle=black!85,
  coltitle=white,
  fonttitle=\bfseries\small,
  attach boxed title to top left={yshift=-0.9mm, xshift=1.5mm},
  boxed title style={
    enhanced,
    arc=2pt,
    top=0.2mm, bottom=0.2mm, left=0.8mm, right=0.8mm,
    boxrule=0pt,
    interior engine=empty,
  },
  #1
}

\usepackage{tikz}
\usepackage{amsmath}
\usepackage{amsfonts}
\usepackage{amssymb}
\usetikzlibrary{arrows.meta, positioning, calc, backgrounds}

\definecolor{navy_blue}{HTML}{2454D4}
\definecolor{crimson_red}{HTML}{B71C1C}
\definecolor{slate_grey}{HTML}{37474F}
\definecolor{noise_amber}{HTML}{0097A7}
\definecolor{bg_contour}{HTML}{F0F2F5}

\title{Stabilizing Extrapolation in Looped Transformers via Learned Stochastic Stopping}

\author{%
  Hsun-Yu Kuo\textsuperscript{1}\thanks{Corresponding author: \texttt{hsun-yu.kuo@epfl.ch}} \quad
  El Mahdi Chayti\textsuperscript{1}\thanks{Equal mentorship.} \quad
  Patrik Reizinger\textsuperscript{2}\footnotemark[2] \quad
  Wieland Brendel\textsuperscript{2}\thanks{Equal supervision.} \quad
  Martin Jaggi\textsuperscript{1}\footnotemark[3]
  \\
  \textsuperscript{1}EPFL, Lausanne, Switzerland \\
  \textsuperscript{2}Max Planck Institute for Intelligent Systems \& ELLIS Institute, T\"ubingen, Germany
}

\begin{document}

\maketitle
\everypar{\looseness=-1}

\begin{abstract}
Looped Transformers --- which repeatedly apply a shared transformer block --- are an architecturally natural fit for variable-length algorithmic tasks. 
Although they can exhibit strong length generalization beyond the length of training sequences, this behavior is brittle, yielding high out-of-distribution (OOD) variance, even across well-performing in-distribution solutions. 
We trace this variance to the spurious correlation in simple algorithmic tasks between sequence length and number of loops.
Introducing stochasticity into the number of loops during training sharply reduces OOD variance and stabilizes predictions across inference-time loop counts.
To improve upon heuristic randomization schemes, we further analyze RL-Halting as a learned stochastic schedule and find that it generally improves the accuracy--stability trade-off.
Across binary addition, Dyck-1, Unique Set, and Copy, learned stochastic stopping often improves this trade-off but can also stabilize a suboptimal computation.
Our work suggests that “when to stop” should be treated as a training-time design choice, not merely an inference-time computation-allocation rule.
%We provide comparisons across binary addition, Dyck-1, Unique Set, and Copy, showing that learned stochastic stopping often improves this trade-off but can also stabilise a suboptimal computation. 
% Overall, our results suggest that ``when to stop'' should be treated as a training-time design choice in looped architectures, not merely as an inference-time compute-allocation rule.
\end{abstract}

\section{Introduction}
\label{sec:intro}

Transformers \cite{NIPS2017_3f5ee243} often achieve strong in-distribution performance, but can fail sharply when evaluation requires \emph{extrapolation}: inputs may be longer, harder, or more compositional than those seen during training~\cite{reizinger_position_2024, abbe_generalization_2024}. 
A common example is \emph{length generalisation}, where models are trained on short instances but tested on longer ones~\cite{zhou_what_2023,cai_extrapolation_2025,lee_self-improving_2025}. 
Algorithmic tasks such as multi-digit addition and planning-style tasks such as maze solving are representative cases: success requires executing a reusable procedure, not merely interpolating between familiar patterns.

A natural way to support such reusable computation is to add \emph{iteration via weight sharing}. 
Looped Transformers and related recurrent-depth architectures repeatedly apply a shared block to an internal state before producing an answer~\cite{dehghani_universal_2018,schwarzschild_can_2021,bansal_end--end_2022,giannou_looped_2023,fan_looped_2024,geiping_scaling_2025,zhu_scaling_2025}. 
This gives a simple computational intuition: if longer or harder instances require more repeated computation, then running the same block for more iterations may enable extrapolation. 
Indeed, prior work has shown that looped models can represent algorithmic computations and can sometimes discover extrapolating solutions~\cite{giannou_looped_2023,yang_looped_2023,fan_looped_2024}. 
However, expressivity alone does not answer the training question: even if an extrapolating computation exists in the architecture, gradient-based training may not reliably select it.

We show that this gap between \emph{architectural potential} and \emph{training outcome} is central in practice. 
On length-generalisation tasks, Looped Transformers can contain computations that solve inputs far beyond the training-length regime, yet this potential is highly unstable across runs. 
Runs with similar in-distribution accuracy can exhibit qualitatively different out-of-distribution (OOD) behaviour: some extrapolate, while others collapse soon after leaving the training regime. 
This suggests that looping does not simply add a uniformly helpful inductive bias. 
Instead, it introduces a training-time \emph{selection problem}: among the many computation trajectories available through repeated application of the same block, training must select one that is both stable and extrapolating.

We study this selection problem through the lens of \emph{stopping schedules}, which determine how many loop iterations are used during training. 
A deterministic schedule supervises each input at a single prescribed depth, potentially making the learned computation brittle to that choice. 
A stochastic schedule instead exposes the model to a local neighbourhood of loop depths, reducing dependence on any one stopping time. 
Empirically, we find that stochastic schedules substantially reduce run-to-run OOD variability and make predictions more stable across inference-time loop counts. 
However, stability alone is not sufficient: a schedule can stabilise a computation that still fails to extrapolate. 
We therefore also study RL-Halting, a learned stochastic stopping schedule related to adaptive-computation methods~\cite{banino_pondernet_2021,zhu_scaling_2025}. 
RL-Halting often improves the accuracy--stability trade-off across sequence tasks, but it can also stabilise a suboptimal computation.

Overall, our results suggest that ``when to stop'' is not merely an inference-time compute-allocation question. 
For looped architectures, it is also a training-time design choice that affects which computation trajectory is learned.
Our \textbf{contributions} are:
\begin{itemize}[leftmargin=*,noitemsep]
    \item \textbf{The extrapolation--stability gap.} 
    We show that deterministic Looped Transformers can contain extrapolating computations, but that these computations are not reliably selected across runs (\cref{sec:looping-enable}).

    \item \textbf{Stabilisation via stochastic schedules.} 
    We show that stochastic stopping schedules reduce run-to-run OOD variability and stabilise predictions across inference-time loop counts (\cref{sec:stochastic-schedules}).

    \item \textbf{Learning stochastic stopping schedules.} 
    We study RL-Halting as a learned stochastic schedule, finding that it often improves the accuracy--stability trade-off, while also showing that stable stopping does not always imply extrapolating stopping (\cref{sec:loss_rl_halting}).
\end{itemize}

\section{Background and Formulation}
\label{sec:background}

\subsection{Looped Transformers}

Looped Transformers implement iterative computation by repeatedly applying the same Transformer block across depth~\cite{dehghani_universal_2018,giannou_looped_2023,fan_looped_2024,geiping_scaling_2025,zhu_scaling_2025}. 
Rather than stacking K independent blocks, a looped model reuses one block across iterations, keeping parameter count fixed while varying computation depth.
% Rather than stacking $K$ independent blocks with separate parameters, a looped model reuses a single block for multiple iterations. 
% This weight tying keeps the parameter count fixed while allowing the amount of computation to vary with the number of loop steps.

In our experiments, we follow the looped architecture used by \citet{fan_looped_2024}. 
Let $x$ be an input sequence and let $E(x)\in\mathbb{R}^{m\times d}$ denote its token embeddings, where $m$ is the token-sequence length. 
The model maintains hidden states $H^{(t)}\in\mathbb{R}^{m\times d}$ for loop iteration $t$, initialised as $H^{(0)}=E(x)$. 
At each loop iteration, we apply the same Transformer block $\mathrm{Block}_{\theta}$:
% \patrik{use latex env block with begin-end, eg, aligned, equation or similar to number teh equations}
\begin{align}
H^{(t)}
=
\mathrm{Block}_{\theta}\!\left(H^{(t-1)} + H^{(0)}\right),
\qquad t=1,\ldots,K.
\end{align}
The addition of $H^{(0)}$ implements input injection \cite{fan_looped_2024,geiping_scaling_2025}: the original input representation is reintroduced at each iteration so that the loop can refine its state while retaining direct access to the input tokens.

After $K$ loop iterations, a readout map produces token-level logits from $H^{(K)}$, and training uses standard token-level cross-entropy on the supervised output positions. 
Supervision is applied only at the chosen loop depth, so training a Looped Transformer also requires specifying where the prediction is read out.
% Crucially, supervision is applied only at the chosen loop depth, rather than at every intermediate iteration. 
% Thus, unlike a standard fixed-depth Transformer, a Looped Transformer requires specifying not only the model parameters $\theta$, but also the loop depth at which the prediction is read out.

\subsection{Stopping Schedules as Training-Time Choices}

A stopping schedule determines how many loop iterations are used for an input. 
We write
$
K \sim \pi_{\mathrm{stop}}(K\mid x),
$
where $\pi_{\mathrm{stop}}$ is a stopping policy that may be deterministic, stochastic, or learned. 
We use $K$ for a generic loop count, and $\tau$ when emphasising a sampled stopping time, as in \cref{sec:loss_rl_halting}. 

For an example $(x,y)$, a stopping schedule induces the objective
\begin{align}
\mathcal{L}_{\pi}(\theta;x,y)
=
\mathbb{E}_{K\sim\pi_{\mathrm{stop}}(\cdot\mid x)}
\ell(\theta;x,y,K),
\end{align}
where $\ell(\theta;x,y,K)$ is the prediction loss after $K$ loop iterations. 
In practice, each optimisation step samples one loop depth per example. 
For deterministic schedules, this reduces to training at the prescribed depth. 
For stochastic schedules, the sampled-depth update is an unbiased estimate of the schedule-averaged objective.

This sampled-depth view is important because the stopping schedule affects training, not only inference. 
By choosing which loop depths receive supervision, the schedule can change which computation trajectory the model learns. 
Fixed schedules always train at a prescribed depth, length-matched schedules tie the depth to the input length, and adaptive-computation methods learn or infer when to stop~\cite{saunshi_reasoning_2024,fan_looped_2024,bansal_thinking_2021,banino_pondernet_2021,zhu_scaling_2025}. 
In the experiments, we compare deterministic schedules, hand-designed stochastic schedules, and a learned stochastic stopping rule.

\subsection{RL-Halting: Learned Sampled-Depth Stopping}
\label{sec:rl-halting-formulation}

We introduce \emph{RL-Halting} as a learned stochastic stopping rule for Looped Transformers. 
A lightweight stopping head parameterises $\pi_{\mathrm{stop},\phi}(\tau\mid x)$, a distribution over loop depths. 
During training, we sample one stopping time $\tau$, supervise the model only at that realised depth, and train the stopping rule using the negative sampled-depth loss as reward,
$
R(x,y,\tau)=-\ell(\theta;x,y,\tau).
$
The stopping parameters are updated with a score-function estimator, also known as REINFORCE~\citep{williams_simple_nodate,murphy_reinforcement_2025},
$
(R-b)\nabla_\phi \log \pi_{\mathrm{stop},\phi}(\tau\mid x),
$
with a moving-average baseline $b$ and an entropy regulariser for exploration. 
RL-Halting therefore replaces a hand-designed depth distribution with a loss-driven learned one while preserving the same one-sampled-depth protocol. 
Details are given in \cref{app:rl-halting}.
\section{Experimental Setup}
\label{sec:setup}

Unless otherwise stated, our diagnostic experiments use binary addition; cross-task results are reported in \cref{sec:loss_rl_halting}.

\subsection{Tasks and length-generalisation protocol}
\label{sec:tasks-protocol}

We evaluate length generalisation under a controlled train--test length shift. 
Let $n$ denote the problem length. 
For binary addition, $n$ is the number of digits in each operand; for the other sequence tasks, $n$ is the corresponding input length or problem size. 
All models are trained on lengths $n<20$, which we call the in-distribution regime (ID), and evaluated on lengths $n\geq 20$, which we call OOD. 
Unless otherwise stated, OOD summaries are computed over lengths $\{20,25,30,\ldots,60\}$.

Our main task is binary addition. 
Each example contains two $n$-digit binary numbers, and the target is their sum followed by an end token; for example, $1011 + 0110$ maps to $10001\,\texttt{<eos>}$. 
Binary addition admits a linear-time algorithm, but vanilla Transformers struggle to length-generalise on this task without explicit index hints~\cite{zhou_what_2023,fan_looped_2024}.

We also report results on Dyck-1, Unique Set, and Copy. 
Dyck-1 tests generalisation in a simple formal-language setting with nested structure. 
Unique Set maps a sequence to the unique symbols in order of first appearance, e.g. $[a,b,a,c,b]\mapsto [a,b,c,\texttt{<eos>}]$. 
Copy maps the input sequence to itself followed by an end token. 
For all tasks, a prediction is correct only if the complete output sequence is correct, including the end token.

\subsection{Models and stopping schedules}
\label{sec:models-schedules}

We compare fixed-depth Transformers with Looped Transformers. 
The fixed-depth baselines are a 3-layer Transformer, matching the parameter count of one looped block, and a 60-layer Transformer, serving as a higher-compute baseline. 
The Looped Transformer reuses a shared 3-layer block across loop iterations.

For Looped Transformers, we evaluate three families of stopping schedules.

\textbf{Deterministic schedules.}
We consider a fixed schedule, $K=20$, as in fixed-depth latent-reasoning settings~\citep{saunshi_reasoning_2024}, and a length-matched schedule, $K=n$, where $n$ is the input length~\citep{fan_looped_2024}.

\textbf{Hand-designed stochastic schedules.}
Starting from the length-matched schedule, we sample
$
\Delta \sim \mathrm{Unif}\{-w,\ldots,w\}
$
and train at
$
K=\mathrm{clip}(n+\Delta,1,T_{\max}).
$
Unless otherwise stated, stochastic schedules sample one loop depth per example per optimisation step.

\textbf{Learned stochastic schedules.}
We study RL-Halting, the learned sampled-depth stopping rule introduced in \cref{sec:rl-halting-formulation}. 
During training, RL-Halting samples one stopping depth per example and updates the stopping distribution using the negative realised task loss as reward. 
This gives a learned counterpart to the hand-designed stochastic schedules.

Additional implementation details are given in \cref{app:implementation-details}.

\subsection{Evaluation metrics}
\label{sec:metrics}

Our primary diagnostic metric is \emph{oracle-over-iterations accuracy}. 
For examples of length $n$, we define
\begin{align}
\mathrm{Acc}_{\mathrm{oracle}}(n)
=
\mathbb{E}_{(x,y)\sim\mathcal{D}_n}
\left[
\max_{K\in\mathcal{K}_{\mathrm{eval}}}
\mathrm{acc}\!\left(\hat y_K(x),y\right)
\right],
\end{align}
where $\hat y_K(x)$ is the prediction after $K$ loop iterations. 
For fixed-depth Transformer baselines, which have no loop depth to select, we evaluate the final prediction at the model's fixed depth.

% Oracle-over-iterations accuracy is not an inference policy: it uses the ground-truth answer to select the best loop depth. 
% We use it as a diagnostic upper bound on the \emph{extrapolation potential} of a trained looped model, separating whether an extrapolating computation exists at some depth from whether a practical stopping rule can select it.
Oracle-over-iterations accuracy is not an inference policy: it uses the ground truth to select the best loop depth. We use it to diagnose whether an extrapolating computation exists at some depth, separately from whether a practical stopping rule selects it.
In \cref{app:oracle-policy-gap}, we explicitly compare this diagnostic with the accuracy obtained by each method's selected stopping rule.

For summary tables, we report three metrics. 
\emph{OOD} is the average oracle-over-iterations accuracy over evaluated OOD lengths $\{20,25,30,\ldots,60\}$. 
\emph{Front.@90} is the mean-accuracy frontier: the largest evaluated OOD length at which the mean accuracy across runs reaches at least $90\%$. 
\emph{Std.} is the standard deviation across runs of per-run average OOD accuracy.

For length-wise figures, we additionally show the mean and standard deviation across independent runs at each input length.

\section{Experimental Results}
\label{sec:experiments}

\subsection{Deterministic Looping Reveals Extrapolation Potential but Is Unstable}
\label{sec:looping-enable}

\begin{figure*}[t]
  \centering
  \includegraphics[width=\textwidth]{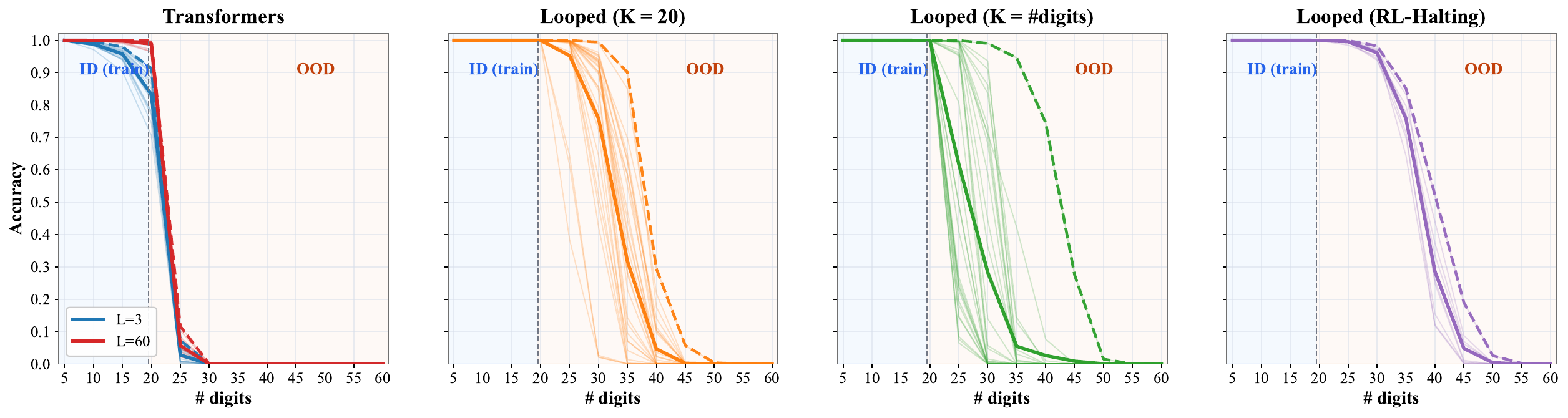}
% \vspace{-5pt}
\caption{
\textbf{Oracle-over-iterations accuracy on binary addition as a function of input length for standard Transformers and Looped Transformers with fixed $K=20$, length-matched $K=n$, and RL-Halting.} 
RL-Halting is shown for reference and discussed in \cref{sec:loss_rl_halting}. 
The shaded region marks the training-length regime (ID), and the right-hand side corresponds to longer OOD inputs. 
Thin solid lines denote individual runs, the bold solid line denotes the mean across runs, and the dashed line denotes the per-length maximum across runs.
}
\label{fig:accuracy-runs-comparison}
\vspace{-15pt}
\end{figure*}

% We first ask whether deterministic looped models reliably learn computations that extrapolate beyond the training-length regime.
% This isolates the effect of looping itself before introducing stochastic or learned stopping schedules.
We first isolate whether deterministic looped models reliably learn computations that extrapolate beyond the training-length regime.
The answer is mixed: looped models can contain extrapolating computations, but these computations are not reliably selected across training runs.

\textbf{Looping reveals extrapolation potential, but not reliable OOD performance.}
\label{sec:potential-instability}

\cref{fig:accuracy-runs-comparison} compares fixed-depth Transformers with Looped Transformers on binary addition.
The fixed-depth Transformer baselines fit the ID regime but degrade sharply beyond the training-length boundary.
Looped Transformers, in contrast, show clear extrapolation potential: for both fixed $K=20$ and length-matched $K=n$, the best runs remain accurate on substantially longer inputs.

This potential is not reflected in reliable average performance.
The individual run curves in \cref{fig:accuracy-runs-comparison} show large OOD variation: some looped runs extrapolate far beyond the training lengths, while others collapse soon after entering the OOD regime.
This is also visible quantitatively in \cref{fig:ood-variance-comparison}: the OOD standard deviation is only $0.009$ and $0.005$ for the 3-layer and 60-layer Transformer baselines, but rises to $0.087$ for Looped Transformers with fixed $K=20$ and $0.122$ for Looped Transformers with $K=n$.
Controlled variants, where either the initialisation or the data order is fixed, show the same qualitative instability; see \cref{app:controlled-variance} for details.

\textbf{Larger loop budgets widen the gap between potential and reliability.}
\label{sec:loop-budget-instability}

\begin{wrapfigure}{r}{0.55\textwidth}
  \vspace{-11pt}
  \centering
  \includegraphics[width=\linewidth]{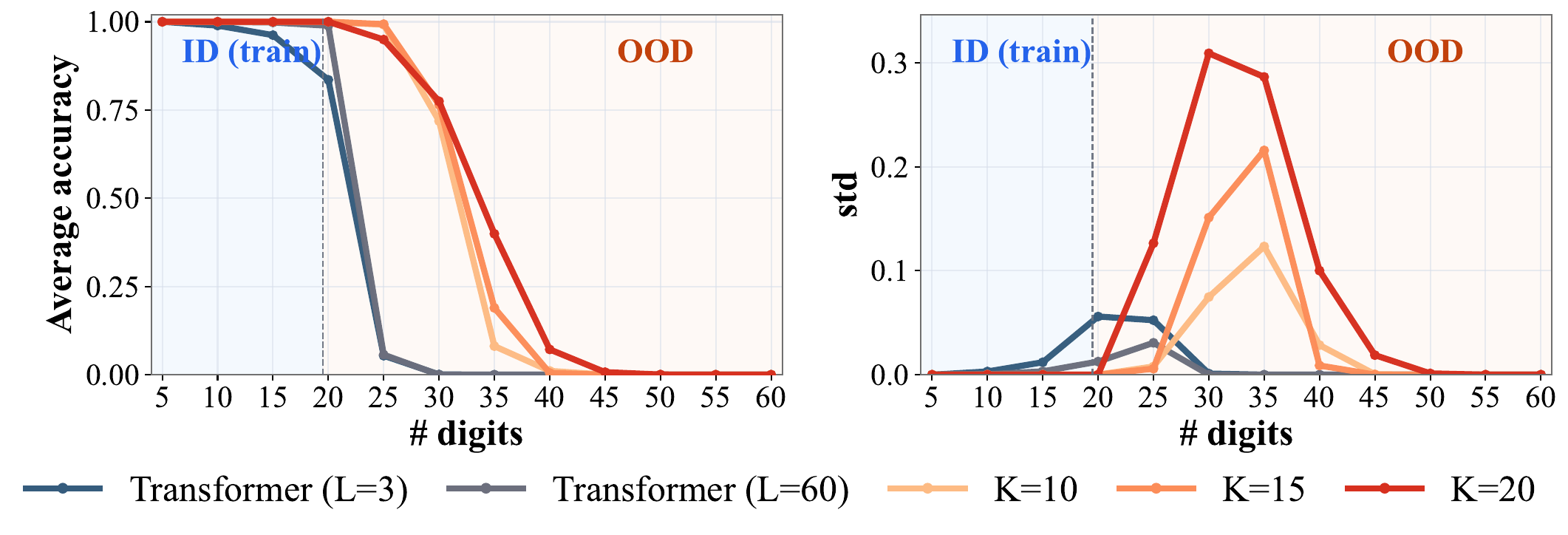}
  \caption{
\textbf{Mean performance and run-to-run variability on binary addition across input lengths. }
\textbf{Left}: mean oracle-over-iterations accuracy across runs. 
\textbf{Right}: standard deviation across runs at each digit length. 
We compare standard Transformer baselines with fixed-$K$ Looped Transformers for $K \in \{10,15,20\}$. 
The shaded region marks the training-length regime (ID); longer inputs are OOD.}
  \label{fig:accuracy-potential-by-length}
\vspace{-10pt}

\end{wrapfigure}

We next vary the fixed loop budget.
As shown in \cref{fig:accuracy-potential-by-length}, increasing $K$ shifts the mean accuracy curve toward longer inputs, suggesting that additional loop computation can extend the range over which some models extrapolate.
However, the same increase also raises run-to-run variability in the OOD regime, especially at intermediate lengths where some runs still solve the task while others have already collapsed.

Thus, larger loop budgets increase extrapolation potential, but also make OOD behaviour less reliable.
This supports the view that looping expands the computation trajectories available to training: some extrapolate, while others fit ID but fail OOD.
% This supports the view that looping expands the set of computation trajectories available to training: some trajectories extrapolate, while others fit the ID regime but fail OOD.

\begin{observation}
\paragraph{Looping creates a selection problem.}
Looping does not merely add computation; it expands the set of computation trajectories that training can select.
Some trajectories extrapolate, but others fit the ID regime while failing OOD.
Thus, the central difficulty is not only whether a looped architecture \emph{can} represent an extrapolating computation, but whether training reliably selects one.
\end{observation}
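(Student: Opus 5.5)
The Observation is an empirical claim, so I would support it with a falsifiable experimental argument rather than a deductive proof. It has three parts, each to be established separately. (i) The architecture can represent an extrapolating computation. (ii) Training also admits non-extrapolating trajectories that fit ID equally well. (iii) Which one is reached is decided by the training process, not by the representational capacity.

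For (i), I would use the per-length maximum across runs (dashed line in \cref{fig:accuracy-runs-comparison}) together with the oracle-over-iterations metric $\mathrm{Acc}_{\mathrm{oracle}}(n)$. The goal is to exhibit at least one trained looped model, for both $K=20$ and $K=n$, whose oracle accuracy stays high well beyond $n=20$. Since $\mathrm{Acc}_{\mathrm{oracle}}$ maximises over depths, this shows that an extrapolating computation exists at some depth. Because the metric is only an upper bound on what a stopping rule can achieve, this establishes representability and nothing more.

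For (ii), I would condition on runs with matched ID accuracy (essentially 100\% for all looped runs) and show that their OOD curves split. Some stay accurate, while others collapse soon after $n=20$, giving OOD standard deviations of $0.087$ and $0.122$ against $\le 0.009$ for the fixed-depth baselines. The baselines are the key control. They fit ID just as well but show almost no OOD spread, so the extra spread must come from looping rather than from the task or the optimiser. I would add the loop-budget sweep of \cref{fig:accuracy-potential-by-length}: as $K$ grows, both the mean frontier and the OOD variance increase. This supports the claim that looping enlarges the set of reachable trajectories, some of which extrapolate.

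For (iii), which I expect to be the main obstacle, I must rule out that the variance is mere noise from one nuisance factor rather than genuine selection among distinct solutions. I would first use the controlled variants of \cref{app:controlled-variance}, fixing either the initialisation or the data order, and show that the instability persists in both. That indicates the learned trajectory is sensitive to the training path as a whole. A stronger step, which I would attempt if feasible, is to show the diverging runs implement qualitatively different computations. One way is to compare accuracy as a function of inference-time $K$: an extrapolating run should keep improving or stay stable past the training depth, while a failing run should degrade off its training depth. Such a dependence on the supervised depth would also motivate the stochastic schedules that follow.
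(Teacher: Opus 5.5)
Your proposal is correct and follows essentially the same route as the paper: individual-run curves in \cref{fig:accuracy-runs-comparison} under oracle-over-iterations accuracy for the existence of extrapolating looped runs; the OOD standard deviations ($0.087$, $0.122$ versus $0.009$, $0.005$ for the fixed-depth baselines) and the loop-budget sweep of \cref{fig:accuracy-potential-by-length} for the split between extrapolating runs and runs that fit ID but fail OOD; and the fixed-initialisation and fixed-data-order controls of \cref{app:controlled-variance} for attributing the outcome to training rather than to representational capacity (your per-run accuracy-versus-$K$ diagnostic is an optional extra that the paper does not use for this observation). One minor correction: the dashed curve is a pointwise maximum over runs, so different runs may supply it at different lengths, and to exhibit a single model that extrapolates you should cite an individual thin run curve rather than that envelope.
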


\subsection{Stochastic Schedules Stabilise Looped Models}
\label{sec:stochastic-schedules}

\begin{wrapfigure}{r}{0.52\textwidth}
\vspace{-15pt}
  \centering
  \includegraphics[width=\linewidth]{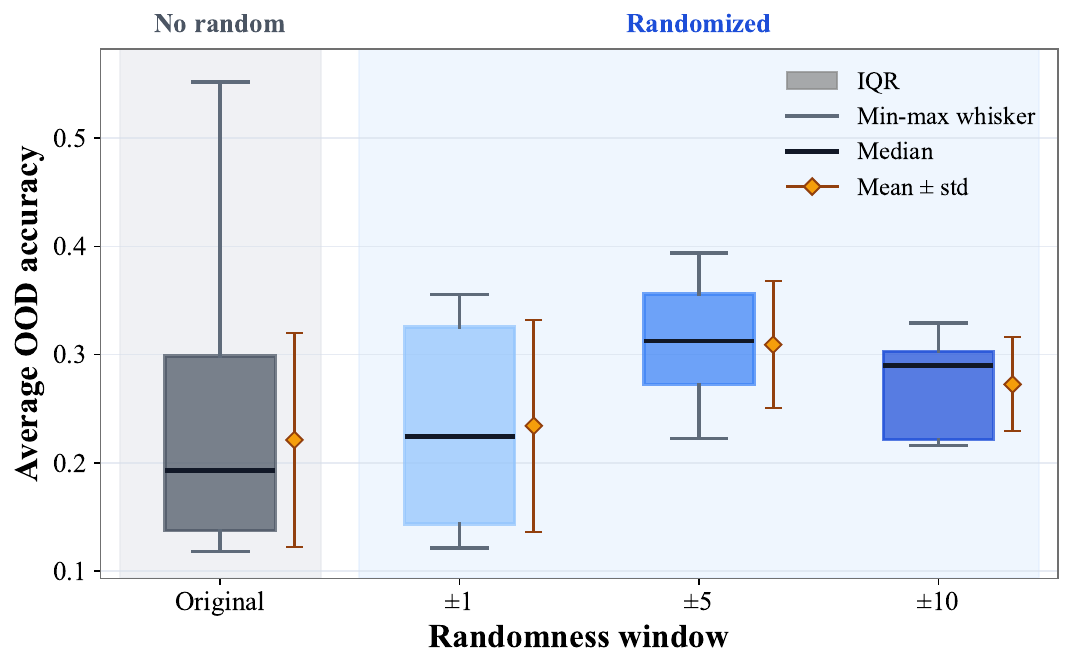}
  \caption{
\textbf{OOD oracle-over-iterations accuracy for stochastic variants of the length-matched schedule on binary addition.}
``Original'' denotes deterministic $K=n$; stochastic variants sample $K$ from a local window around $n$.
Boxes show the interquartile range across runs, centre lines show the median, whiskers show the min--max range, and orange diamonds with error bars show the mean $\pm$ one standard deviation.
}
%   \caption{
% \textbf{OOD oracle-over-iterations accuracy for stochastic variants of the length-matched schedule on binary addition.} 
% ``Original'' denotes the deterministic schedule $K=n$; stochastic variants sample $K$ from a local window around $n$. 
% Dots show the mean across runs, thick vertical bars show $\pm$ one standard deviation, and light horizontal bars show the min--max range.
% \patrik{a box plot basically does the same and it's standard - you might consider replacing it}}
\label{fig:random-window-ood-summary}
\vspace{-20pt}
\end{wrapfigure}

We now test whether this selection problem can be regularised by randomising the training loop depth. 
Starting from the length-matched schedule $K=n$, we train stochastic variants that sample from a local window around $n$. 
We evaluate whether these schedules reduce run-to-run OOD variability and make each model less sensitive to inference-time loop depth. 
A toy explanation is given in \cref{sec:toy-analysis}.

\textbf{Randomised schedules reduce run-to-run variability.}
\label{sec:randomness-less-fragile}

\cref{fig:random-window-ood-summary} shows that stochastic schedules substantially reduce the spread of OOD outcomes across runs. 

The deterministic length-matched schedule has a wide interquartile range and a large min--max range, indicating that different runs select very different OOD behaviours.
By contrast, the $\pm5$ and $\pm10$ windows produce more concentrated distributions and smaller standard deviations.
A fixed-budget ablation in \cref{app:fixed-k-random-ablation} shows a similar stabilising effect when randomising around $K=20$.
% The deterministic length-matched schedule has a wide min--max range, indicating that different runs select very different OOD behaviours. 
% By contrast, the $\pm5$ and $\pm10$ windows produce much tighter ranges and smaller standard deviations. A fixed-budget ablation in \cref{app:fixed-k-random-ablation} shows a similar stabilising effect when randomising around $K=20$.

The effect is primarily one of stabilisation, not a monotonic improvement in accuracy. 
Central OOD performance improves from the deterministic schedule to the $\pm5$ window, but decreases again for the $\pm10$ window.
% The mean OOD accuracy increases from the deterministic schedule to the $\pm5$ window, but decreases again for the $\pm10$ window. 
Thus, randomisation can reduce OOD variance, but too much randomness need not further improve extrapolation.

\textbf{Randomised schedules stabilise predictions across loop depths.}
\label{sec:loop-trajectory-stability}

\begin{figure*}[t]
  \centering
  \includegraphics[width=\textwidth]{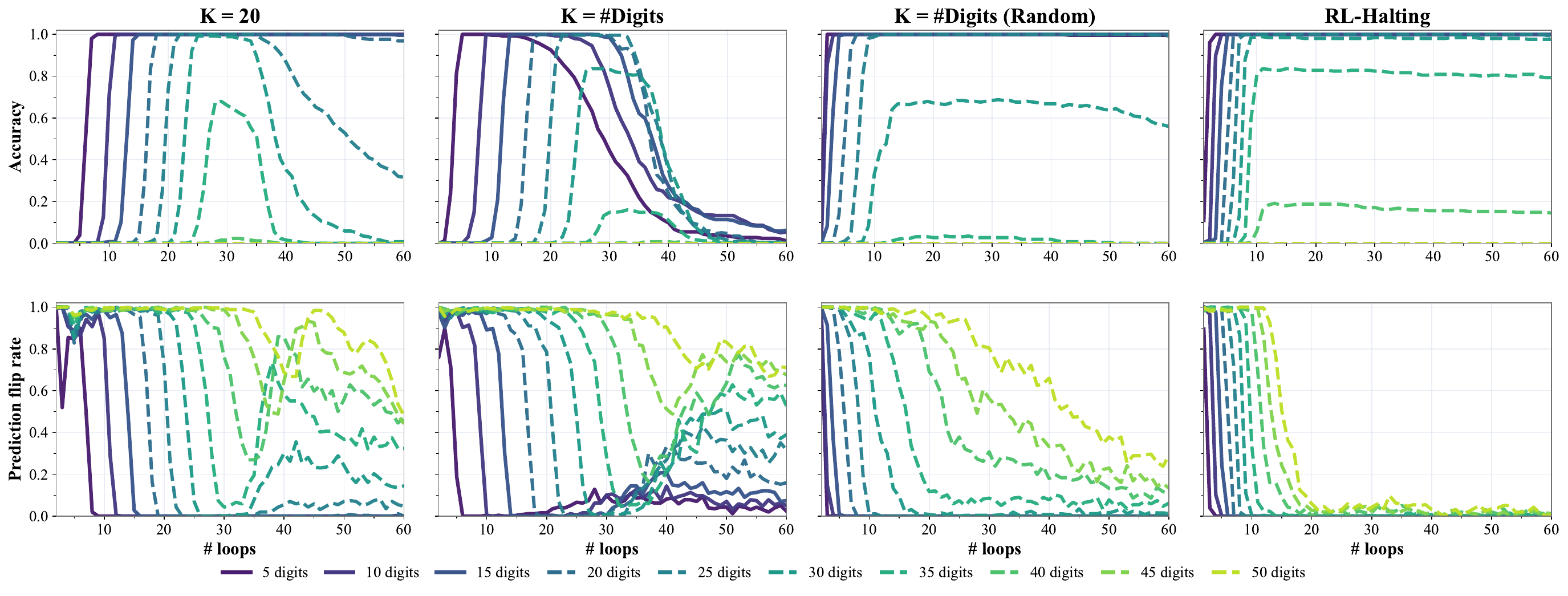}
  \caption{
\textbf{Accuracy and prediction dynamics across inference-time loop counts on binary addition.}
Columns compare models trained with fixed $K=20$, length-matched $K=\#\mathrm{Digits}$, randomised length-matched $K=\#\mathrm{Digits}$ (Random), and RL-Halting. 
Here $K=\#\mathrm{Digits}$ (Random) denotes training with $K=\mathrm{clip}(n+\Delta,1,T_{\max})$, where $\Delta\sim\mathrm{Unif}\{-5,\ldots,5\}$.
\textbf{Top row:} accuracy as a function of inference-time loop count $K$ for different input lengths. 
\textbf{Bottom row:} prediction flip rate between adjacent loop counts. 
Lower flip rates indicate more stable loop trajectories. 
RL-Halting is shown for reference and discussed in \cref{sec:loss_rl_halting}.
}
  \label{fig:loop-trajectory-stability}
\end{figure*}

We next test whether stochastic schedules also stabilise the loop trajectory within a trained model. 
For each model, we sweep the inference-time loop count $K$ and measure accuracy at each depth. 
We also measure the prediction flip rate,
$
\Pr_x[\hat y_{K+1}(x)\neq \hat y_K(x)],
$
which records how often the predicted sequence changes between adjacent loop depths.

\cref{fig:loop-trajectory-stability} shows that deterministic schedules are sensitive to the inference-time stopping depth. 
They often achieve high accuracy only over a narrow range of loop counts, and their predictions can continue changing across many iterations. 
Randomised training reduces this sensitivity: the randomised length-matched schedule gives a broader high-accuracy region across $K$ and a faster decrease in prediction flip rate. 
This matters in practice because the best loop depth is unknown at test time: in \cref{app:oracle-policy-gap}, we show that more stable loop trajectories reduce the gap between oracle-over-iterations accuracy and the accuracy obtained by the method's selected stopping rule.

\begin{observation}
\paragraph{Stochastic schedules stabilise looped models along two axes.}
Randomising the training loop depth reduces OOD variability across independent training runs and makes predictions less sensitive to the inference-time loop count within a run. 
Thus, stochastic schedules regularise both which computation trajectory training selects and how sharply performance depends on the stopping depth. 
However, this stabilisation does not by itself guarantee that the selected trajectory extrapolates best.
\end{observation}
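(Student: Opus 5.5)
The claim in the Observation is empirical, not a theorem. I will therefore justify it along the two axes it names, with one controlled comparison each, and then add a simple surrogate model (the toy explanation pointed to in \cref{sec:toy-analysis}) to explain the mechanism.

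\textbf{Axis 1: run-to-run variability.} I will train many independent runs of the deterministic length-matched schedule $K=n$ and of the stochastic variants $K=\mathrm{clip}(n+\Delta,1,T_{\max})$ with $\Delta\sim\mathrm{Unif}\{-w,\ldots,w\}$. I will use $w\in\{5,10\}$ and keep the architecture, data and budget fixed. For each run I compute the per-run average OOD oracle-over-iterations accuracy. I then compare the resulting distributions using interquartile range, min--max range and standard deviation (\cref{fig:random-window-ood-summary}). The fixed-budget ablation around $K=20$ (\cref{app:fixed-k-random-ablation}) checks that the effect is not specific to length matching.

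\textbf{Axis 2: sensitivity to inference-time depth within a run.} For each trained model I sweep the inference-time $K$ and record accuracy per length and the flip rate $\Pr_x[\hat y_{K+1}(x)\neq\hat y_K(x)]$. I expect randomised training to give a wider high-accuracy plateau and a faster decay of the flip rate (\cref{fig:loop-trajectory-stability}). The practical consequence is a smaller oracle--policy gap, which I will measure in \cref{app:oracle-policy-gap}.

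\textbf{Toy mechanism.} Model the loop as an iterated map $h_{t}=f_\theta(h_{t-1},x)$. A deterministic schedule constrains only $h_{K(x)}$, so any trajectory that passes through the correct readout exactly at step $K(x)$ is a global minimiser. A schedule spread over a window constrains $h_t$ for all $t$ in that window, which forces approximate stationarity: $\|f_\theta(h,x)-h\|$ must be small near the solution. This shrinks the set of minimisers toward fixed-point-like computations. Fewer minimisers should mean less dependence on initialisation and data order across runs. Approximate stationarity is itself what makes the output insensitive to the stopping depth.

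\textbf{Caveat and main obstacle.} The final clause, that stabilisation need not yield the best extrapolation, needs its own evidence: the non-monotone trend from $\pm5$ to $\pm10$, and the cross-task cases in \cref{sec:loss_rl_halting} where a stable trajectory fails OOD. The hardest step is showing that the effect is not merely reduced optimisation noise. To address this, I would first rerun the controlled variants with fixed initialisation or fixed data order (\cref{app:controlled-variance}) under the stochastic schedule, and check whether the variance reduction persists.
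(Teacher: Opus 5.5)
Your empirical plan is the paper's own support for this Observation. For the first axis, the paper compares the spread of $K=n$ against the $\pm5$ and $\pm10$ windows in \cref{fig:random-window-ood-summary} and adds the $K=20\pm5$ ablation. For the second axis, it sweeps inference-time accuracy and flip rate in \cref{fig:loop-trajectory-stability} and measures the oracle--policy gap. For the caveat, it uses the non-monotone $\pm5\to\pm10$ trend. One refinement on the caveat: for hand-designed schedules, the relevant cross-task evidence is the $K=n,\pm5$ rows of \cref{tab:ood_generalization} on Unique and Copy, together with the $K=20\pm5$ ablation. In those cases variance drops while mean OOD accuracy stays equal or falls. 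RL-Halting on Copy speaks to learned schedules, not hand-designed ones.

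Where you genuinely differ is the explanatory toy model. The paper uses two separate mechanisms. Gradient averaging (\cref{prop:gradient-averaging}) bounds the depth-specific part of the expected update by $B^2(\rho+(1-\rho)/m)$ and is offered as the explanation of reduced run-to-run spread. Loop consistency (\cref{prop:loop-consistency}) explains depth insensitivity. Your single stationarity/fewer-minimisers story compresses both, but two of its steps do not go through as written.

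First, a windowed cross-entropy constrains only $\mathrm{Readout}(h_t)$. So $h_t$ may drift within the set of states that decode to $y$, and $\|f_\theta(h,x)-h\|$ is not forced to be small. The paper flags exactly this point (cross-entropy does not identify a unique hidden representation) and imposes Assumption~\ref{assump:shared-target-representation} to obtain hidden-state closeness. What you do get for free is output-level consistency across the window, which is all the within-window flip-rate claim needs.

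Second, parameters that fit every depth in the window do form a subset of those that fit depth $n$ alone. But that only bounds the worst-case range of OOD behaviour. It does not imply that the solutions SGD reaches under different seeds and data orders have smaller variance, which is the quantity actually measured. The gradient-averaging argument is the paper's attempt to address that step directly, at the cost of a local shared-gradient assumption.

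Neither issue affects the Observation itself, which rests on the experiments. Your proposed rerun of the fixed-initialisation and fixed-data controls under the stochastic schedule is a sensible extra check that the paper does not perform.
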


% Together, \cref{fig:random-window-ood-summary,fig:loop-trajectory-stability} show that stochastic schedules stabilise looped models in two complementary ways: across independent training runs, and across inference-time loop depths within a run.

% \begin{observation}
% \paragraph{Stochastic schedules stabilise selection, but do not guarantee extrapolation.}
% Randomising the training loop depth reduces dependence on a single prescribed stopping time. 
% This makes looped models less fragile across both training runs and inference-time depths. 
% However, randomisation only stabilises the computation selected by training; it does not ensure that the selected computation is the most extrapolating one. 
% This motivates learning the stopping distribution rather than fixing it by hand.
% \end{observation}

\subsection{Learned Stopping Can Improve the Accuracy--Stability Trade-off}
\label{sec:loss_rl_halting}

The stochastic schedules above show that sampling loop depths can stabilise looped models, but their distributions are hand-designed and depend on the chosen window size. 
We therefore ask whether the stopping distribution itself can be learned while preserving the sampled-depth training protocol. 
We evaluate RL-Halting, introduced in \cref{sec:rl-halting-formulation}, as an adaptive intervention for studying the accuracy--stability trade-off in length generalisation.

RL-Halting differs from the hand-designed stochastic schedules in one key respect: instead of sampling from a fixed window around the input length, it learns a loss-driven distribution over stopping depths. 
At the same time, it keeps the comparison aligned with \cref{sec:stochastic-schedules}, because each update still supervises the Looped Transformer at a single sampled depth. 
Implementation details are given in \cref{app:rl-halting}. 
On binary addition, we further inspect the learned stopping distribution and its conditional entropy in \cref{app:learned-stopping-schedule-addition}, suggesting that RL-Halting learns a length-dependent but non-degenerate stochastic schedule.

% The stochastic schedules above show that sampling loop depths can stabilise looped models, but their distributions are hand-designed and depend on the chosen window size. 
% We therefore ask whether the stopping distribution itself can be learned. 
% Our goal is not to introduce a new adaptive-computation method, but to use learned stopping as an adaptive intervention for studying the accuracy--stability trade-off in length generalisation.

% We use \emph{RL-Halting} as an adaptive stochastic stopping rule. 
% A small stopping head defines a distribution over loop depths, samples one depth for each training example, and supervises the Looped Transformer only at that depth. 
% This matches the hand-designed stochastic schedules in \cref{sec:stochastic-schedules} in one important respect: each update is based on the loss at a single sampled loop depth. 
% Unlike those schedules, however, RL-Halting learns the sampling distribution from the realised loss rather than fixing it by hand. 
% Implementation details, including the stopping distribution and policy-gradient update, are given in \cref{app:rl-halting}. 
% On binary addition, we further inspect the learned stopping distribution and its conditional entropy in \cref{app:learned-stopping-schedule-addition}, suggesting that RL-Halting learns a length-dependent but non-degenerate stochastic schedule.
% Implementation details, including the stopping distribution and policy-gradient update, are given in \cref{app:rl-halting}.

\textbf{Learned schedules often improve the trade-off.}

\begin{wraptable}{r}{0.5\textwidth}
\vspace{-18pt}
\centering
\scriptsize
\setlength{\tabcolsep}{2.2pt}
\renewcommand{\arraystretch}{0.88}
\caption{
\textbf{OOD performance and stability across tasks. }
OOD is accuracy over lengths 20--60, Front.@90 is the mean-accuracy frontier at $90\%$, and Std. is the standard deviation across runs.
}
\label{tab:ood_generalization}
\begin{tabular}{llccc}
\toprule
Task & Method & OOD $\uparrow$ & Front.@90 $\uparrow$ & Std. $\downarrow$ \\
\midrule
Addition 
& Fix $K=20$ & 34.2 & 25 & 7.4 \\
& $K=n$ & 22.1 & 20 & 9.9 \\
& $K=n,\pm5$ & 30.9 & 25 & 5.9 \\
& RL-Halt & \textbf{45.0} & \textbf{30} & \textbf{2.7} \\
\midrule
Dyck-1 
& Fix $K=20$ & 84.6 & 30 & 23.8 \\
& $K=n$ & 66.6 & 25 & 33.3 \\
& $K=n,\pm5$ & 88.2 & 35 & 18.7 \\
& RL-Halt & \textbf{97.5} & \textbf{60} & \textbf{3.6} \\
\midrule
Unique 
& Fix $K=20$ & 66.1 & 20 & 34.0 \\
& $K=n$ & 78.5 & 35 & 17.5 \\
& $K=n,\pm5$ & 73.2 & \textbf{40} & 9.3 \\
& RL-Halt & \textbf{82.8} & \textbf{40} & \textbf{3.3} \\
\midrule
Copy 
& Fix $K=20$ & 62.3 & 25 & 27.7 \\
& $K=n$ & \textbf{68.7} & \textbf{35} & 13.2 \\
& $K=n,\pm5$ & 55.9 & 30 & 8.5 \\
& RL-Halt & 43.2 & 25 & \textbf{2.7} \\
\bottomrule
\end{tabular}
\vspace{-8pt}
\end{wraptable}

\cref{tab:ood_generalization} summarises OOD accuracy, extrapolation frontier, and run-to-run stability across four tasks. 
On Addition, Dyck-1, and Unique, RL-Halting gives the best OOD accuracy while also giving the lowest OOD standard deviation. 
This suggests that the stabilising effect of sampled-depth training can be made adaptive: rather than fixing a depth window in advance, the model can learn where to allocate stopping probability.

\textbf{Stability does not guarantee the best extrapolating schedule.}
The Copy task shows the limitation. 
RL-Halting reduces OOD standard deviation from $13.2$ to $2.7$, but its average OOD accuracy and extrapolation frontier are worse than the length-matched schedule. 
Thus, a learned stochastic schedule can stabilise the model around a suboptimal computation. 
This reinforces the message of \cref{sec:stochastic-schedules}: stabilising depth selection is useful, but it is not sufficient to guarantee that the selected computation extrapolates.

\begin{observation}
\paragraph{Adaptive stopping improves the trade-off, but does not solve selection.}
Learning the stopping distribution can improve both extrapolation and stability on several tasks, suggesting that the benefits of stochastic schedules can be made adaptive. 
However, low variance and high extrapolation accuracy remain distinct objectives: a stable schedule can still select a non-extrapolating computation.
\end{observation}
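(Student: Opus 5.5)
The statement is an empirical observation with two parts. I would make each part precise and support it separately. The first part is \emph{existence}: learned stochastic stopping can improve both extrapolation and stability. The second part is \emph{non-implication}: low run-to-run variance does not entail high OOD accuracy.

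For the existence part, no theory is needed beyond the protocol already fixed. It suffices to point to \cref{tab:ood_generalization}. On Addition, Dyck-1 and Unique, RL-Halting attains simultaneously the highest OOD accuracy and the lowest across-run standard deviation, compared with the deterministic ($K=20$, $K=n$) and hand-designed ($K=n,\pm5$) schedules. This is exactly the claimed improvement of the trade-off. I would also cite \cref{app:learned-stopping-schedule-addition}, where the learned $\pi_{\mathrm{stop},\phi}$ is length-dependent but has non-zero conditional entropy. That shows the improvement comes from an adaptive \emph{stochastic} schedule rather than a collapse to a deterministic one, which connects the result to the stabilisation mechanism of \cref{sec:stochastic-schedules}.

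For the non-implication part, I would give a counterexample in the spirit of the toy analysis of \cref{sec:toy-analysis}, and then an empirical instance.

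\emph{Computations.} Consider two loop computations available to the shared block.
\begin{itemize}[leftmargin=*,noitemsep]
  \item Computation $A$ implements the length-generalising procedure. It is correct at depth $K\ge n$ in principle, but its readout is accurate only in a narrow band near $K=n$, for example because the state keeps drifting after completion.
  \item Computation $B$ converges to a fixed point after a few iterations. It is therefore flat in $K$, but it fits only lengths $n<20$ and fails for larger $n$.
\end{itemize}

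\emph{Objective.} Under any stopping distribution with non-trivial spread, the objective $\mathcal{L}_\pi$ averages $\ell(\theta;x,y,K)$ over depths. The entropy regulariser in RL-Halting guarantees such spread. Writing $q$ for the probability mass the policy places outside $A$'s accurate band, the ID loss is $\mathcal{L}_\pi(A)\approx q\cdot c$ for some loss level $c>0$, whereas $\mathcal{L}_\pi(B)\approx 0$.

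\emph{Consequence.} The sampled-depth objective strictly prefers $B$. The REINFORCE update with reward $-\ell$ then reinforces a spread-out policy, which further penalises $A$. Because $B$ is the unique low-loss solution, every run converges to it, so the across-run variance is near zero while OOD accuracy stays low.

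The Copy row of \cref{tab:ood_generalization} is the empirical instance of this mechanism. There, RL-Halting reduces the standard deviation from $13.2$ to $2.7$, yet its OOD accuracy ($43.2$) and frontier ($25$) are worse than those of $K=n$.

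The main obstacle is the formal step showing $\mathcal{L}_\pi(B)<\mathcal{L}_\pi(A)$ \emph{at the learned policy}, since the policy and the block are optimised jointly. In principle, RL-Halting could concentrate its mass on $A$'s accurate band and remove $A$'s disadvantage. I would first handle this with the entropy regulariser. Specifically, I would lower-bound the mass $q$ the policy must place outside any band of width $w$ by a function of the entropy coefficient. I would then show that $q\cdot c$ exceeds $B$'s ID loss whenever $A$'s accurate band is narrower than the support the entropy term enforces.
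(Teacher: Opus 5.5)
Your empirical argument is the paper's own. The first half rests on the Addition, Dyck-1 and Unique rows of \cref{tab:ood_generalization}, where RL-Halting has the best OOD accuracy and the lowest Std. The second half rests on the Copy row: Std.\ drops from $13.2$ to $2.7$, while OOD accuracy falls ($43.2<68.7$) and Front.@90 falls ($25<35$) relative to $K=n$. The second sentence of the observation is an existence claim, so this one instance already establishes it, and the paper offers nothing beyond it. The entropy diagnostic of \cref{app:learned-stopping-schedule-addition} covers addition only and is presented as suggestive. It therefore cannot show \emph{where} the improvement comes from.

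The toy counterexample is where the proposal breaks. The step you call the main obstacle is not needed for the statement at all, and two of its steps fail.

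\textbf{The planned lemma cannot hold.} The entropy term is a soft bonus, not a constraint, so it enforces no support width. For an unrestricted policy, maximising $\mathbb{E}_\pi[-\ell]+\lambda_{\mathrm{ent}}\mathcal{H}(\pi)$ gives $\pi(k)\propto e^{-\ell_k/\lambda_{\mathrm{ent}}}$. With loss $0$ on a band of width $w$ and $c$ elsewhere, the out-of-band mass is $q=(T-w)e^{-c/\lambda_{\mathrm{ent}}}/\bigl(w+(T-w)e^{-c/\lambda_{\mathrm{ent}}}\bigr)$. This tends to $0$ as $c/\lambda_{\mathrm{ent}}$ grows, so there is no lower bound on $q$ in terms of $\lambda_{\mathrm{ent}}$ alone. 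Moreover, $A$'s excess loss satisfies $c\,q\le\lambda_{\mathrm{ent}}(\log(T/w)+1)$, which is a few hundredths at the paper's $\lambda_{\mathrm{ent}}=0.01$, $T=20$. If $B$'s ID loss is only ``$\approx 0$'', a margin this small need not favour $B$ at all. The paper points the same way: an entropy coefficient of $0.01$ did not prevent the PonderNet-style baseline from often collapsing to near-deterministic halting. RL-Halting's entropy is also reported as lower than that of wide uniform windows. So ``the entropy regulariser guarantees such spread'' is unsupported.

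\textbf{The convergence step does not follow.} Even granting that $B$ has lower objective, ``$B$ is the unique low-loss solution, so every run converges to it'' is not implied. Comparing objective values in a non-convex problem says nothing about which basin each run reaches. \cref{sec:looping-enable} shows exactly that runs with similar ID performance select different computations. The near-zero variance in your toy is therefore asserted, not derived.

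\textbf{The toy is not tied to Copy.} Nothing in the paper links the Copy result to this mechanism: there is no Copy-specific depth-sweep or entropy diagnostic. The Copy model also still reaches Front.@90 $=25$, unlike your $B$, which fails for every $n\ge 20$.

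Either drop the toy, or label it explicitly as a heuristic picture in the spirit of \cref{sec:toy-analysis}, which claims only a possible mechanism.
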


\section{Why Does Randomisation Stabilise Looping?} 
\label{sec:toy-analysis} 

We now give an informal explanation for the two stabilising effects observed above: lower run-to-run OOD variability and lower sensitivity to the inference-time loop count.
The goal is not to prove improved extrapolation, but to identify two possible mechanisms: gradient averaging and loop-trajectory consistency.
Formal toy statements and proofs are given in \cref{app:randomized-schedule-theory,app:loop-consistency-theory}.
% The previous section showed that randomising the training loop depth stabilises Looped Transformers in two ways: it reduces run-to-run OOD variability, and it makes predictions less sensitive to the inference-time loop count. We now give an informal explanation for these effects. The goal is not to prove that randomisation improves extrapolation, but to identify two simple mechanisms by which it can reduce fragility. Formal toy statements and proofs are given in \cref{app:randomized-schedule-theory,app:loop-consistency-theory}.

\subsection{Why can randomisation reduce fragility? A toy gradient-averaging view}
\label{sec:toy-gradient-averaging}

Although each SGD step samples only one loop depth, the expected gradient corresponds to a depth-averaged objective. 
Let $L_k(\theta;z)$ be the loss on example $z=(x,y)$ after $k$ loop iterations, and write
$
G_k(\theta;z)=\nabla_\theta L_k(\theta;z)
$
for the corresponding depth-$k$ gradient. 
Define
\begin{align}
J_w(\theta;z)
=
\frac{1}{|S_w(x)|}
\sum_{k\in S_w(x)} L_k(\theta;z),
\end{align}
where $S_w(x)$ is a local window of loop depths around the input length. 
If $K\sim\mathrm{Unif}(S_w(x))$, then
\begin{align}
\mathbb{E}_{K}\big[G_K(\theta;z)\big]
=
\nabla_\theta J_w(\theta;z).
\end{align}
Thus, randomisation changes the expected gradient from a single-depth gradient to an average over nearby loop depths.
\cref{fig:gradient-averaging-illustration} illustrates this view: a single-depth gradient can contain a shared component and a depth-specific residual, while randomising over nearby depths averages these residuals in expectation.

To interpret this averaging effect, write each depth-specific gradient as
$
\textcolor{crimson_red}{G_k}=\textcolor{slate_grey}{\bar G}+\textcolor{noise_amber}{\xi_k},
$
where $\textcolor{slate_grey}{\bar G}$ is the component shared across nearby depths and $\textcolor{noise_amber}{\xi_k}$ is a depth-specific residual. 
If the residuals are bounded and weakly aligned, averaging over loop depths suppresses their contribution, so the expected update $\textcolor{navy_blue}{\mathbb{E}[G_K]}$ moves closer to the shared direction $\textcolor{slate_grey}{\bar G}$. 
This gives a possible mechanism for the reduced run-to-run spread in \cref{fig:random-window-ood-summary}. 
A formal version of this toy argument is given in \cref{prop:gradient-averaging}.

\begin{wrapfigure}{r}{0.5\textwidth}
  \vspace{-15pt}
  \centering

    \resizebox{\linewidth}{!}{%
    \begin{tikzpicture}[
        vector/.style={->, -{Stealth[scale=1.1]}, thick},
        faint_vector/.style={->, -{Stealth[scale=0.7]}, thin, crimson_red, opacity=0.23},
        faint_residual/.style={->, -{Stealth[scale=0.65]}, dashed, thin, noise_amber, opacity=0.45},
        contour/.style={draw=gray!30, thick},
        smalllabel/.style={font=\small, text=slate_grey}
    ]

    \draw[contour, fill=bg_contour]
        plot [smooth cycle, tension=0.7]
        coordinates {(-0.8,-0.35) (4.8,-0.35) (5.0,3.4) (2.5,4.2) (-0.9,3.0)};

    \coordinate (O) at (0,0);
    \coordinate (Gbar) at (2.25,2.15);
    \coordinate (Gavg) at ($(Gbar)+(-8:0.25cm)$);
    \coordinate (Gk) at ($(Gbar)+(-40:1.55cm)$);

    \fill[slate_grey] (O) circle (2.3pt)
        node[below=4pt, font=\normalsize] {$\theta$};

    \draw[draw=slate_grey!60, fill=white, fill opacity=0.45, dashed, thick]
        (Gbar) circle (1.55cm);
    \node[smalllabel, anchor=south east] at ($(Gbar)+(130:1.65cm)$)
        {residual bound $B$};

    \draw[draw=navy_blue!60, fill=navy_blue!10, fill opacity=0.75, dashed, thick]
        (Gbar) circle (0.35cm);

    \foreach \angle/\radius in {20/1.35, 60/1.45, 105/1.25, 150/1.55, 205/1.30, 250/1.40, 295/1.20} {
        \coordinate (Grand) at ($(Gbar)+(\angle:\radius)$);
        \draw[faint_vector] (O) -- (Grand);
        \draw[faint_residual] (Gbar) -- (Grand);
    }

    \draw[vector, crimson_red, line width=1.35pt] (O) -- (Gk)
        node[
            pos=0.65, 
            below right=0pt,
            inner sep=1pt,
            text=crimson_red,
            font=\normalsize
        ] {$G_k$};

    \draw[vector, noise_amber, dashed, line width=1.2pt] (Gbar) -- (Gk)
        node[
            midway,
            sloped,
            above=0pt,
            inner sep=2pt,
            text=noise_amber,
            font=\normalsize
        ] {$\xi_k$};

    \draw[vector, navy_blue, line width=1.8pt] (O) -- (Gavg)
        node[
            pos=0.6, 
            right=8pt, 
            inner sep=0pt,
            text=navy_blue,
            font=\normalsize\bfseries
        ] {$\mathbb{E}[G_K]$};

    \draw[vector, slate_grey, line width=1.45pt] (O) -- (Gbar)
        node[
            pos=0.5, 
            above left=1pt,
            inner sep=0pt,
            text=slate_grey,
            font=\normalsize\bfseries
        ] {$\bar G$};

    \node[
        draw=gray!35,
        fill=white,
        fill opacity=0.96,
        text opacity=1,
        rounded corners=3pt,
        inner sep=4pt,
        anchor=north, 
        text=slate_grey
    ] at (2.0, -0.8) { 
        \begin{tabular}{@{}ll@{}}
        \textcolor{crimson_red}{\small\textbf{Single depth:}}
        &
        $\small \textcolor{crimson_red}{G_k}=\textcolor{slate_grey}{\bar G}+\textcolor{noise_amber}{\xi_k}$
        \\[0.1em]
        \textcolor{navy_blue}{\small\textbf{Randomised:}}
        &
        $\small \textcolor{navy_blue}{\mathbb{E}[G_K]}
        =
        \textcolor{slate_grey}{\bar G}+\textcolor{noise_amber}{\mathbb{E}[\xi_K]}\approx\textcolor{slate_grey}{\bar G}$
        \end{tabular}
    };

    \end{tikzpicture}
    }
    \caption{
\textbf{Toy geometry of gradient averaging.}
For an example $z$, $\textcolor{crimson_red}{G_k(\theta;z)}=\nabla_\theta L_k(\theta;z)$ is the gradient used to update the looped model parameters $\theta$ when training stops after $k$ loop iterations. 
We write $\textcolor{crimson_red}{G_k}=\textcolor{slate_grey}{\bar G}+\textcolor{noise_amber}{\xi_k}$, where $\textcolor{slate_grey}{\bar G}$ is the component common to gradients from nearby depths and $\textcolor{noise_amber}{\xi_k}$ is a depth-specific residual. 
A single-depth update can follow one residual direction, while sampling $K$ from nearby depths averages these residuals, moving $\textcolor{navy_blue}{\mathbb{E}_K[G_K]}$ closer to the shared component $\textcolor{slate_grey}{\bar G}$.
}
% \caption{
% Toy geometry of gradient averaging. 
% A single-depth update $\textcolor{crimson_red}{G_k}$ can include a depth-specific residual $\xi_k$, while randomising over nearby depths averages these residuals and moves the expected update $\textcolor{navy_blue}{\mathbb{E}[G_K]}$ closer to the shared component $\textcolor{slate_grey}{\bar G}$.
% \patrik{add a bit more context/refer to main text: I guess G is updating the weights (which weights in particular? what does k refer to? depth?) also if the reader only reads the caption, they won't know what a shared component is}}
    % \caption{
    % Toy geometry of gradient averaging. 
    % A single-depth update $\textcolor{crimson_red}{G_k}$ can deviate from the shared direction $\textcolor{slate_grey}{\bar G}$ by a depth-specific residual $\textcolor{noise_amber}{\xi_k}$. 
    % When randomised schedules sample several nearby loop depths, weakly aligned residuals partially cancel in expectation, moving the expected update $\textcolor{navy_blue}{\mathbb{E}[G_K]}$ closer to the shared component $\textcolor{slate_grey}{\bar G}$. 
    % This provides a possible stabilising mechanism, but does not imply that $\textcolor{slate_grey}{\bar G}$ is extrapolating.
    % }
    \label{fig:gradient-averaging-illustration}
\vspace{-50pt}
\end{wrapfigure}

This mechanism explains stabilisation, not extrapolation. 
The shared direction $\textcolor{slate_grey}{\bar G}$ need not be the direction that leads to the best OOD solution, so lower variance need not imply higher mean OOD accuracy.

\subsection{Why can randomisation stabilise the loop trajectory?}
\label{sec:toy-loop-consistency}

A complementary view is that randomised schedules supervise the same input at multiple nearby loop depths. 
A deterministic schedule only asks the model to be correct at one prescribed depth, whereas a randomised schedule asks it to produce compatible answers across a local window of depths.

This can stabilise the loop trajectory. 
If nearby loop states are trained to support the same target output, then predictions are less likely to change abruptly from one iteration to the next. 
This gives an interpretation of the lower prediction flip rates and broader useful-$K$ regions in \cref{fig:loop-trajectory-stability}. 
Again, this is only a stabilising mechanism: it encourages consistency across nearby depths, but does not guarantee that the consistent computation is the extrapolating one. 
A formal sufficient-condition argument is given in \cref{prop:loop-consistency}.

\section{{Related Works}}
\textbf{Stability of length generalisation.}
Prior work has shown that Transformer length generalisation can be possible but fragile. 
Most directly, \citet{zhou_transformers_2024} show that extrapolation on integer addition is highly sensitive to random initialisation and training data order, even with strong in-distribution performance. 
Other work connects length-generalisation failures to positional-encoding distribution shift~\citep{ruoss_randomized_2023,cho_position_2024,kazemnejad_impact_nodate}, attention-variance mechanisms~\citep{liVANISHINGVARIANCETRANSFORMER2025}, and broader effects of initialisation or data ordering on optimisation~\citep{ramasingheHowMuchDoes2023,hacohenPowerCurriculumLearning2019,schwarzschild_can_2021}. 
Our work studies a complementary instability specific to looped architectures: the loop count used during training is not merely a compute budget, but affects which computation trajectory is selected and how stable length extrapolation becomes.

\textbf{Extrapolation and recurrent structure.}
Recurrent and weight-tied architectures support extrapolation by reusing computation across multiple steps, as studied in recurrent networks, Universal Transformers, deep equilibrium models, and Looped Transformers~\citep{schwarzschild_can_2021,bansal_end--end_2022,dehghani_universal_2018,anil_path_nodate,giannou_looped_2023,yang_looped_2023,fan_looped_2024,saunshi_reasoning_2024,geiping_scaling_2025,zhu_scaling_2025,wang_hierarchical_2025}. 
In particular, \citet{fan_looped_2024} show that Looped Transformers can achieve strong length generalisation on arithmetic tasks, demonstrating the extrapolation potential of repeated computation. 
Our work studies a complementary question: whether this potential is selected reliably during training. 
We show that looped models can exhibit high run-to-run OOD instability, and that stochastic loop schedules can regularise this instability by reducing dependence on a single prescribed loop depth.

\textbf{Learning when to stop.}
Adaptive-computation methods learn to allocate computation across inputs, from Adaptive Computation Time~\citep{graves_adaptive_2017} to probabilistic halting methods such as PonderNet~\citep{banino_pondernet_2021}. 
In recurrent reasoning, the stopping rule matters because too few iterations may under-compute, while too many can cause overthinking and hurt extrapolation~\citep{bansal_thinking_2021,bansal_end--end_2022}. 
Recent Looped Language Models also use learned depth allocation and relate it to PonderNet-style objectives~\citep{zhu_scaling_2025}. 
Our RL-Halting variant is related to these methods, but matches our sampled-depth protocol: each update samples one stopping depth and supervises only that realised depth. 
This lets us study learned stopping as an adaptive form of stochastic loop-depth regularisation, rather than primarily as compute saving.
Appx.~\ref{app:pondernet-comparison} compares RL-Halting with a PonderNet-style baseline.

\textbf{Randomisation as regularisation.}
Randomisation is widely used to regularise neural networks by preventing over-reliance on a single computation path. 
Dropout randomly removes units~\citep{srivastavaDropoutSimpleWay}, stochastic depth randomly skips residual layers~\citep{huangDeepNetworksStochastic2016}, and LayerDrop applies structured dropout to Transformer layers~\citep{fanReducingTransformerDepth2019}. 
Randomisation has also been used to improve length generalisation, for example through randomised positional encodings~\citep{ruoss_randomized_2023}, and to make equilibrium-style models less sensitive to unrolling depth through randomised-depth training~\citep{anil_path_nodate}. 
Our work studies a different axis of randomisation: the loop depth at which a recurrent-depth model is supervised. 
This can be viewed as regularising the learned computation trajectory by averaging over nearby stopping depths, thereby reducing sensitivity to a single prescribed loop count.

\section*{Conclusion}
\label{sec:conclusion}
We studied length generalisation in Looped Transformers through stopping schedules. 
Deterministic schedules reveal extrapolation potential but select it unreliably across runs, creating a training-time selection problem. 
Stochastic schedules improve run-to-run stability and robustness to inference-time loop count by training across nearby depths. 
RL-Halting often improves this accuracy--stability trade-off, but can also stabilise a suboptimal computation. 
Thus, ``when to stop'' is a central training-time design choice in looped architectures.
% We studied length generalisation in Looped Transformers through the lens of stopping schedules. 
% Our results show that looping does not merely add computation; it creates a training-time selection problem over computation trajectories. 
% Deterministic schedules can reveal strong extrapolation potential, but this potential is unstable across runs. 
% Stochastic schedules reduce this fragility by training across nearby loop depths, improving both run-to-run stability and robustness to the inference-time loop count. 
% However, stability alone does not guarantee extrapolation: RL-Halting often improves the accuracy--stability trade-off, but can also stabilise a suboptimal computation. 
% These findings suggest that ``when to stop'' should be treated as a central training-time design choice in looped architectures. 

\section*{Author Contributions}
H.-Y.K. developed the experimental framework, implemented the methods, conducted the experiments, analysed the results, and led the writing of the manuscript. The project originated in W.B.'s group, with P.R. and W.B. playing central roles in the initial brainstorming, defining the research direction, and shaping the experimental agenda. P.R. provided close mentoring throughout the project, including guidance on method development, experimental design, interpretation of results, and manuscript preparation. After the project continued in M.J.'s group at EPFL, E.M.C. also provided close mentoring and regular guidance on the research direction, experimental design, manuscript preparation, and possible methodological extensions, including reinforcement-learning-based improvements. M.J. provided supervisory feedback, as well as institutional and computational support at EPFL.

\vspace{-10pt}
\section*{Acknowledgements}
H.-Y.K. initiated this project during an internship in W.B.'s group at the Max Planck Institute for Intelligent Systems and the ELLIS Institute, T\"ubingen, and continued it as a project student in M.J.'s group at EPFL. We gratefully acknowledge the EPFL Research Computing Platform (RCP) for providing computational resources through M.J.'s group. P.R. acknowledges his membership in the European Laboratory for Learning and Intelligent Systems (ELLIS) PhD program and thanks the International Max Planck Research School for Intelligent Systems (IMPRS-IS) for its support. This work was supported by the German Federal Ministry of Education and Research (BMBF) through the T\"ubingen AI Center, FKZ: 01IS18039A, and by a grant from Coefficient Giving to Aaron Mueller. W.B. acknowledges financial support from an Emmy Noether Grant funded by the German Research Foundation (DFG) under grant no.\ BR 6382/1-1, and from the Open Philanthropy Foundation, funded by the Good Ventures Foundation. W.B. is a member of the Machine Learning Cluster of Excellence, EXC number 2064/1 -- Project number 390727645. This research also used compute resources at the T\"ubingen Machine Learning Cloud, DFG FKZ INST 37/1057-1 FUGG. Finally, we thank the organisers and participants of the Fourth Bellairs Workshop on Causality, held at the McGill University Bellairs Research Institute from 14--21 February 2025, for helpful discussions that contributed to the development of the project idea, in particular discussions with Kartik Ahuja.

\bibliographystyle{unsrtnat}
\bibliography{my_library}
% \section*{References}

% References follow the acknowledgments in the camera-ready paper. Use unnumbered first-level heading for
% the references. Any choice of citation style is acceptable as long as you are
% consistent. It is permissible to reduce the font size to \verb+small+ (9 point)
% when listing the references.
% Note that the Reference section does not count towards the page limit.
% \medskip

% {
% \small

% [1] Alexander, J.A.\ \& Mozer, M.C.\ (1995) Template-based algorithms for
% connectionist rule extraction. In G.\ Tesauro, D.S.\ Touretzky and T.K.\ Leen
% (eds.), {\it Advances in Neural Information Processing Systems 7},
% pp.\ 609--616. Cambridge, MA: MIT Press.

% [2] Bower, J.M.\ \& Beeman, D.\ (1995) {\it The Book of GENESIS: Exploring
%   Realistic Neural Models with the GEneral NEural SImulation System.}  New York:
% TELOS/Springer--Verlag.

% [3] Hasselmo, M.E., Schnell, E.\ \& Barkai, E.\ (1995) Dynamics of learning and
% recall at excitatory recurrent synapses and cholinergic modulation in rat
% hippocampal region CA3. {\it Journal of Neuroscience} {\bf 15}(7):5249-5262.
% }

%%%%%%%%%%%%%%%%%%%%%%%%%%%%%%%%%%%%%%%%%%%%%%%%%%%%%%%%%%%%

\appendix
\newpage

\crefalias{section}{appendix}
\crefalias{subsection}{appendix}

% \section{Technical appendices and supplementary material}
% Technical appendices with additional results, figures, graphs, and proofs may be submitted with the paper submission before the full submission deadline (see above). You can upload a ZIP file for videos or code, but do not upload a separate PDF file for the appendix. There is no page limit for the technical appendices. 

% Note: Think of the appendix as ``optional reading'' for reviewers. The paper must be able to stand alone without the appendix; for example, adding critical experiments that support the main claims to an appendix is inappropriate. 

\section{Implementation Details}
\label{app:implementation-details}

\paragraph{Architecture and training.}
Our Looped Transformer implementation follows Fan et al.~\cite{fan_looped_2024}. 
One looped block consists of three Transformer layers and is reused across loop iterations with input injection at each step. 
Following the length-generalisation setup of Fan et al.~\cite{fan_looped_2024}, we use no positional embeddings (NoPE). 
The 3-layer Transformer baseline matches the parameter count of one looped block, while the 60-layer Transformer baseline provides a higher-compute fixed-depth comparison. 

All models are trained with AdamW using batch size 64 for 100k gradient steps. 
We use an initial learning rate of $10^{-4}$ and decay it to $0$ with a cosine schedule over training.
For stability evaluation, we run $32$ independent seeds on Addition and $18$ independent seeds on the other tasks.

\paragraph{Stopping schedules.}
For stochastic schedules, each optimisation step samples one loop count per example. 
For the randomised length-matched schedules, we sample
\begin{align}
\Delta \sim \mathrm{Unif}\{-w,\ldots,w\},
\qquad
K=\mathrm{clip}(n+\Delta,1,T_{\max}).
\end{align}

\paragraph{Loop-depth computation during training.}
During training, all Looped Transformer runs use a maximum loop budget of $T_{\max}=20$. 
For the fixed-depth schedule $K=c$, the model is unrolled for exactly $c$ loop iterations on every batch, and the loss is computed from the representation at depth $c$. 
For the length-matched schedule $K=n$, examples in the same batch may have different lengths. 
Rather than unrolling each example separately, we unroll the whole batch up to the maximum target depth in that batch, $\max_i n_i$, and compute the loss for each example $i$ using its corresponding representation at depth $n_i$. 

For the randomised length-matched schedule, each example samples a depth
$
K_i=\mathrm{clip}(n_i+\Delta_i,1,T_{\max}).
$
We then unroll the batch up to $\max_i K_i$ and compute each example's loss using the representation at its sampled depth $K_i$. 
For RL-Halting, we unroll the model up to $T_{\max}=20$ in order to construct the full stopping distribution over depths. 
A stopping time $\tau_i$ is sampled for each example, and the task loss is computed only from the representation at the sampled depth $\tau_i$. 
Thus, all stochastic schedules use one supervised depth per example per update, but differ in whether that depth is fixed by hand, sampled from a hand-designed distribution, or sampled from a learned stopping distribution.

\paragraph{Evaluation.}
Unless otherwise stated, OOD summaries are computed over lengths $\{20,25,30,\ldots,60\}$, and a prediction is correct only if the complete output sequence is correct, including the end token.

\paragraph{Controlled variance experiments.}
For the ``Original'' setting, both random initialisation and data order vary across runs. 
For ``Fixed init.'', all runs use the same initialisation and vary only the data order. 
For ``Fixed data'', all runs use the same data order and vary only the initialisation.

\paragraph{Compute resources.}
All experiments can be run on a single NVIDIA A100 GPU with 80GB memory, without model or data parallelism. 
A single Looped Transformer training run typically takes less than four hours, depending on the task and stopping schedule; fixed-depth Transformer baselines are generally cheaper. 
The total compute scales with the number of independent seeds reported above: $32$ seeds for Addition and $18$ seeds for each of the other tasks.

\section{RL-Halting Details}
\label{app:rl-halting}

We describe the learned stochastic stopping rule used in \cref{sec:rl-halting-formulation,sec:loss_rl_halting}. 
RL-Halting is related to adaptive-computation and probabilistic-halting methods such as Adaptive Computation Time and PonderNet~\citep{graves_adaptive_2017,banino_pondernet_2021}, as well as recent learned-depth looped models~\citep{zhu_scaling_2025}. 
Unlike objectives that train a distribution-weighted loss over all depths, RL-Halting follows our sampled-depth protocol: it samples one depth during training, supervises only at the realised depth, and updates the stopping policy from the realised task loss.

\paragraph{Stopping distribution.}
Let $H^{(t)} \in \mathbb{R}^{m \times d}$ denote the hidden states after $t$ loop iterations, and let $T_{\max}$ be the maximum allowed number of loop iterations. After each iteration, we average-pool the hidden states to obtain a summary vector
\begin{equation}
h_t = \mathrm{Pool}(H^{(t)}).
\end{equation}
Following a hazard-style parameterisation similar to PonderNet~\citep{banino_pondernet_2021}, a learned stopping head $q_\phi$ maps this summary to a stopping hazard
\begin{equation}
r_t = \sigma(q_\phi(h_t)).
\end{equation}
where $\sigma$ is the sigmoid function. The hazard $r_t$ is the probability of stopping at iteration $t$ conditioned on not having stopped earlier. This induces the stopping distribution
\begin{equation}
\pi_{\mathrm{stop},\phi}(\tau=t \mid x)
=
r_t \prod_{i=1}^{t-1}(1-r_i),
\qquad t < T_{\max}.
\end{equation}
The remaining probability mass is assigned to the final iteration:
\begin{equation}
\pi_{\mathrm{stop},\phi}(\tau=T_{\max} \mid x)
=
\prod_{i=1}^{T_{\max}-1}(1-r_i).
\end{equation}
Thus, the model is forced to stop by $T_{\max}$.

\paragraph{Sampled-depth supervision.}
For each training example $(x,y)$, we sample a stopping time
\begin{equation}
\tau \sim \pi_{\mathrm{stop},\phi}(\cdot \mid x).
\end{equation}
The Looped Transformer is then supervised only at the sampled depth $\tau$. Let $\hat{y}_\tau(x)$ denote the prediction after $\tau$ loop iterations. The task loss is
\begin{equation}
\ell(\theta; x,y,\tau)
=
\mathrm{CE}(\hat{y}_\tau(x), y),
\end{equation}
where $\mathrm{CE}$ denotes the task cross-entropy loss. The model parameters $\theta$ are updated using the ordinary supervised gradient of this sampled-depth loss.

\paragraph{Stopping-policy update.}
The stopping policy is trained with a score-function estimator, also known as REINFORCE~\citep{williams_simple_nodate,murphy_reinforcement_2025}. 
We define the reward as the negative task loss:
\begin{equation}
R(x,y,\tau) = -\ell(\theta; x,y,\tau).
\end{equation}
The objective for the stopping policy is to maximise the expected reward under the stopping distribution:
\begin{equation}
J(\phi)
=
\mathbb{E}_{\tau \sim \pi_{\mathrm{stop},\phi}(\cdot \mid x)}
[
R(x,y,\tau)
].
\end{equation}
Using the score-function identity, the policy-gradient estimator is
\begin{equation}
\nabla_\phi J(\phi)
=
\mathbb{E}_{\tau \sim \pi_{\mathrm{stop},\phi}(\cdot \mid x)}
[
(R(x,y,\tau)-b)
\nabla_\phi \log \pi_{\mathrm{stop},\phi}(\tau \mid x)
],
\end{equation}
where $b$ is a moving-average baseline used to reduce gradient variance. In practice, we also add an entropy regulariser over the stopping distribution to encourage exploration over loop depths:
\begin{equation}
J_{\mathrm{ent}}(\phi)
=
J(\phi)
+
\lambda_{\mathrm{ent}}
\mathcal{H}
(
\pi_{\mathrm{stop},\phi}(\cdot \mid x)
).
\end{equation}

\paragraph{Training loss.}
Equivalently, when using gradient descent, we minimise the stopping-policy loss
\begin{equation}
\mathcal{L}_{\mathrm{halt}}(\phi)
=
-
\mathrm{sg}(R-b)
\log \pi_{\mathrm{stop},\phi}(\tau \mid x)
-
\lambda_{\mathrm{ent}}
\mathcal{H}
(
\pi_{\mathrm{stop},\phi}(\cdot \mid x)
),
\end{equation}
where $\mathrm{sg}(\cdot)$ denotes stop-gradient. The total update consists of the sampled-depth task loss for the Looped Transformer and the policy-gradient loss for the stopping head:
\begin{equation}
\mathcal{L}
=
\ell(\theta; x,y,\tau)
+
\mathcal{L}_{\mathrm{halt}}(\phi).
\end{equation}

\paragraph{Relation to stochastic schedules.}
The hand-designed stochastic schedules in Sec.~\ref{sec:stochastic-schedules} sample loop depths from a fixed distribution, such as a local window around the input length. RL-Halting instead learns this distribution from the realised task loss. Both approaches expose the model to sampled stopping depths during training, but RL-Halting makes the sampling distribution adaptive.

\paragraph{Training, test-time horizons.}
During training, RL-Halting samples stopping times from a truncated hazard distribution with horizon $T_{\mathrm{train}}=20$. 
The remaining probability mass is assigned to the final training iteration $T_{\mathrm{train}}$, ensuring that every training example has a valid sampled stopping depth. 
At test time, because the Transformer block and stopping head are shared across loop iterations, the same hazard rule can be evaluated over a different finite horizon. 
For policy-based evaluation in our experiments, we use $T_{\mathrm{test}}=30$.

More generally, for any chosen horizon $T$, the truncated stopping distribution is
\begin{align}
\pi_{\mathrm{stop},\phi}^{(T)}(\tau=t\mid x)
=
r_t\prod_{i=1}^{t-1}(1-r_i),
\qquad t<T,
\end{align}

with the remaining tail mass assigned to the final depth,
\begin{align}
\pi_{\mathrm{stop},\phi}^{(T)}(\tau=T\mid x)
=
\prod_{i=1}^{T-1}(1-r_i).
\end{align}

Thus, $T$ is a truncation horizon for sampling or analysis, not a separate learned parameter. 
The hazards $r_t$ are produced by the same shared stopping head at each loop depth; changing $T$ only changes where the remaining tail probability is placed.

\paragraph{Implementation details.}
The stopping head $q_\phi$ is implemented as a single linear layer mapping the pooled hidden state $h_t$ to a scalar logit. 
The moving-average baseline is updated with decay $0.9$ using the minibatch mean reward:
$
b \leftarrow 0.9\,b + 0.1\,\bar R,
$
where $\bar R$ is the average reward in the current minibatch. 
We set the entropy coefficient to $\lambda_{\mathrm{ent}}=0.01$ in all RL-Halting experiments. 
For the halting-policy loss $\mathcal{L}_{\mathrm{halt}}$, the pooled hidden states are detached before computing the stopping logits, so this loss updates only the stopping head parameters $\phi$; the Looped Transformer parameters $\theta$ are updated through the sampled-depth task loss $\ell(\theta;x,y,\tau)$.

\section{Additional Results}
\label{app:additional-results}

\subsection{Controlled variance experiments}
\label{app:controlled-variance}

\begin{figure}[t]
  \centering
  \includegraphics[width=\columnwidth]{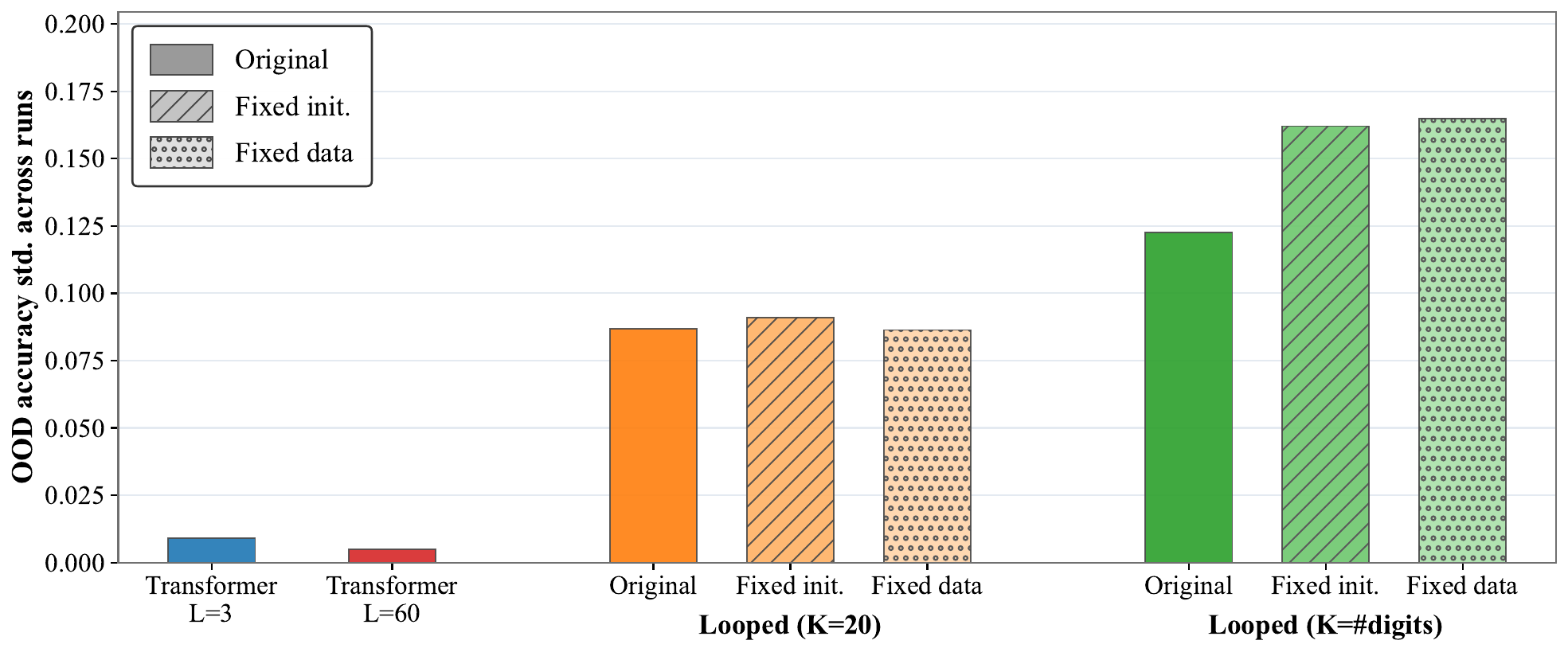}
  \caption{
  OOD variability on binary addition across models and controlled settings. 
  Bars show the mean standard deviation of oracle-over-iterations accuracy across runs, averaged over OOD digit lengths from $20$ to $60$. 
  For Looped Transformers, ``Original'' varies both initialisation and data order across runs, ``Fixed init.'' varies only data order, and ``Fixed data'' varies only initialisation. 
  Hatching marks the controlled settings.
  }
  \label{fig:ood-variance-comparison}
\end{figure}

To test whether OOD instability is caused mainly by random initialisation or by data order, we run two controlled variants of the looped experiments.
In the ``Fixed init.'' setting, all runs use the same initialisation and differ only in data order.
In the ``Fixed data'' setting, all runs use the same data order and differ only in initialisation.

\cref{fig:ood-variance-comparison} shows that the high OOD variance of Looped Transformers does not disappear under either control.
For fixed $K=20$, the OOD standard deviation remains around $0.09$ in both controlled settings, compared with $0.087$ in the original setting.
For the length-matched schedule $K=n$, the controlled standard deviations are even higher, around $0.16$, compared with $0.122$ in the original setting.
By contrast, the fixed-depth Transformer baselines have much lower OOD standard deviation, $0.009$ for the 3-layer model and $0.005$ for the 60-layer model.

These results suggest that OOD instability is not an artefact of varying all sources of randomness simultaneously.
Both initialisation and data order can steer looped training toward different extrapolation behaviours, even when models achieve similar in-distribution performance.

\subsection{Randomising around a fixed loop budget}
\label{app:fixed-k-random-ablation}

We also test whether stochasticity helps when the centre of the schedule is fixed rather than length-matched. 
Starting from the fixed schedule $K=20$, we sample
$
\Delta\sim\mathrm{Unif}\{-5,\ldots,5\}
$
and train at
$
K=20+\Delta.
$

\begin{table}[t]
\centering
\caption{
\textbf{Fixed-budget randomisation and learned-halting baselines on binary addition.}
OOD is average oracle-over-iterations accuracy over lengths $20$--$60$, Front.@90 is the mean-accuracy frontier at $90\%$, and Std. is the standard deviation across runs.
}
\label{tab:fixed-k-random-ablation}
\begin{tabular}{lccc}
\toprule
Setting & OOD $\uparrow$ & Front.@90 $\uparrow$ & Std. $\downarrow$ \\
\midrule
Fix $K=20$ & 34.2 & 25 & 7.4 \\
$K=20\pm5$ & 34.2 & 25 & 4.7 \\
PonderNet & 29.4 & 20 & 15.1 \\
RL-Halting & \textbf{45.0} & \textbf{30} & \textbf{2.7} \\
\bottomrule
\end{tabular}
\end{table}

\cref{tab:fixed-k-random-ablation} shows that randomising around the fixed budget leaves the mean OOD accuracy and extrapolation frontier unchanged, but reduces the standard deviation across runs from $7.4$ to $4.7$. 
This suggests that the stabilising effect of stochastic schedules is not specific to length-matched schedules: exposing the model to nearby loop depths can reduce run-to-run variability even when the schedule is centred at a fixed compute budget.
\subsection{Oracle--policy gap on Unique Set}
\label{app:oracle-policy-gap}

Our main results use oracle-over-iterations accuracy to diagnose whether a trained looped model produces a correct output at some evaluated depth. 
However, oracle accuracy does not measure whether a practical stopping rule selects that depth. 
To make this distinction explicit, we compare oracle accuracy with \emph{policy accuracy} on Unique Set. 
Policy accuracy evaluates each method using its inference-time stopping rule: $K=20$ for \emph{$K=20$}, $K=n$ for \emph{$K=\#\mathrm{Digits}$}, and the centre depth $K=n$ for \emph{$K=\#\mathrm{Digits}$ (Random $\pm5$)}. 
For \emph{RL-Halting}, we choose the stopping depth with the highest learned probability.

We define the oracle--policy gap at length $n$ as
$$
\mathrm{Gap}(n)
=
\mathrm{Acc}_{\mathrm{oracle}}(n)
-
\mathrm{Acc}_{\mathrm{policy}}(n).
$$
A large gap means that a correct output exists at some loop depth, but the stopping rule often fails to select it. 
Thus, the gap measures a failure of depth selection rather than the absence of an extrapolating computation.

\begin{figure}[t]
  \centering
  \includegraphics[width=\textwidth]{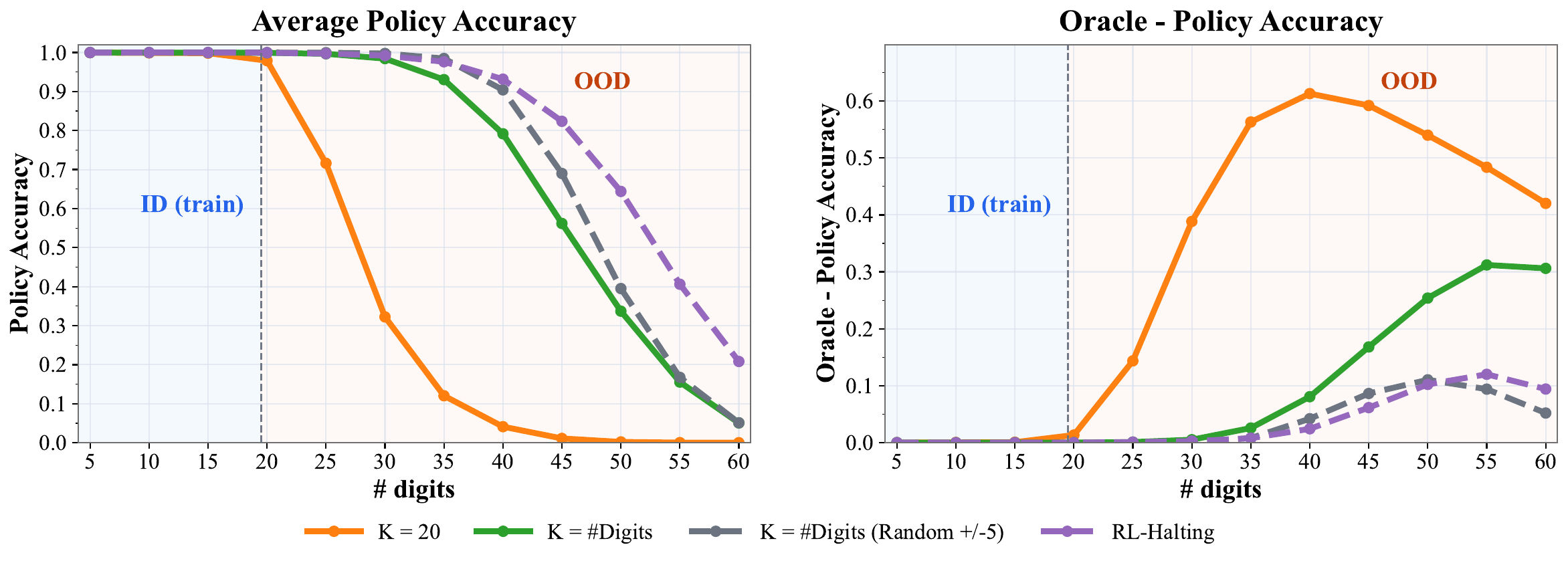}
  \caption{
  \textbf{Policy accuracy and oracle--policy gap on Unique Set.}
  Left: policy accuracy, evaluating each method using its prescribed stopping rule.
  Right: oracle-over-iterations accuracy minus policy accuracy.
  A large gap indicates that a correct output exists at some loop depth but is not selected by the policy.
  Randomised and learned stochastic schedules reduce this gap, suggesting that they stabilise predictions across loop depths and make performance less sensitive to the exact stopping time.
  }
  \label{fig:oracle-policy-gap-unique}
\end{figure}

\cref{fig:oracle-policy-gap-unique} shows that deterministic schedules can have a large oracle--policy gap in the OOD regime. 
For fixed $K=20$, policy accuracy collapses soon after the training-length boundary, while oracle accuracy remains substantially higher, indicating that the model can still produce correct outputs at some other loop depths. 
The length-matched schedule performs better, but still leaves a growing gap at longer OOD lengths.

By contrast, the randomised length-matched schedule and RL-Halting have much smaller oracle--policy gaps. 
This is consistent with the trajectory-stability results in \cref{fig:loop-trajectory-stability}: when predictions are more stable across nearby loop depths, the exact stopping time matters less, and policy accuracy approaches oracle accuracy. 
Thus, stochastic schedules reduce not only run-to-run variability, but also the mismatch between the computation available somewhere along the loop and the computation selected by the stopping rule.

A small oracle--policy gap does not by itself imply strong extrapolation. 
A model can have a small gap because it is stably correct, but also because it is stably wrong. 
For this reason, we report both policy accuracy and oracle--policy gap: policy accuracy measures the quality of the selected computation, while the gap measures how much performance is lost due to imperfect depth selection.

\subsection{Learned stopping schedule on binary addition}
\label{app:learned-stopping-schedule-addition}

We further inspect the stopping distribution learned by RL-Halting on binary addition. 
The goal is to understand whether the learned schedule collapses to an almost deterministic stopping rule, or whether it remains genuinely stochastic during length extrapolation.

For this diagnostic, we evaluate the learned stopping head over a visualisation horizon $T_{\mathrm{vis}}=60$. 
This horizon is used only to inspect the learned distribution and compare it with length-matched uniform schedules. 
For any horizon $T$, the same hazard rule defines a truncated stopping distribution $\pi_{\mathrm{stop},\phi}^{(T)}(\tau=k\mid x)$, with the remaining tail probability assigned to the final depth $T$.

We quantify the randomness of the learned schedule using the conditional entropy
\begin{align}
H_T(\tau\mid x)
=
-
\sum_{k=1}^{T}
\pi_{\mathrm{stop},\phi}^{(T)}(\tau=k\mid x)
\log_2
\pi_{\mathrm{stop},\phi}^{(T)}(\tau=k\mid x).
\end{align}

In this appendix, we set $T=T_{\mathrm{vis}}=60$ and report the entropy averaged over binary-addition examples of the same digit length,
\begin{align}
H(n)
=
\mathbb{E}_{x:\#\mathrm{digits}(x)=n}
\left[
H_{T_{\mathrm{vis}}}(\tau\mid x)
\right].
\end{align}

This measures randomness after conditioning on the input, and therefore distinguishes genuinely stochastic stopping from deterministic input-dependent schedules such as $K=n$, which have zero conditional entropy.

\begin{figure}[t]
  \centering
  \includegraphics[width=\textwidth]{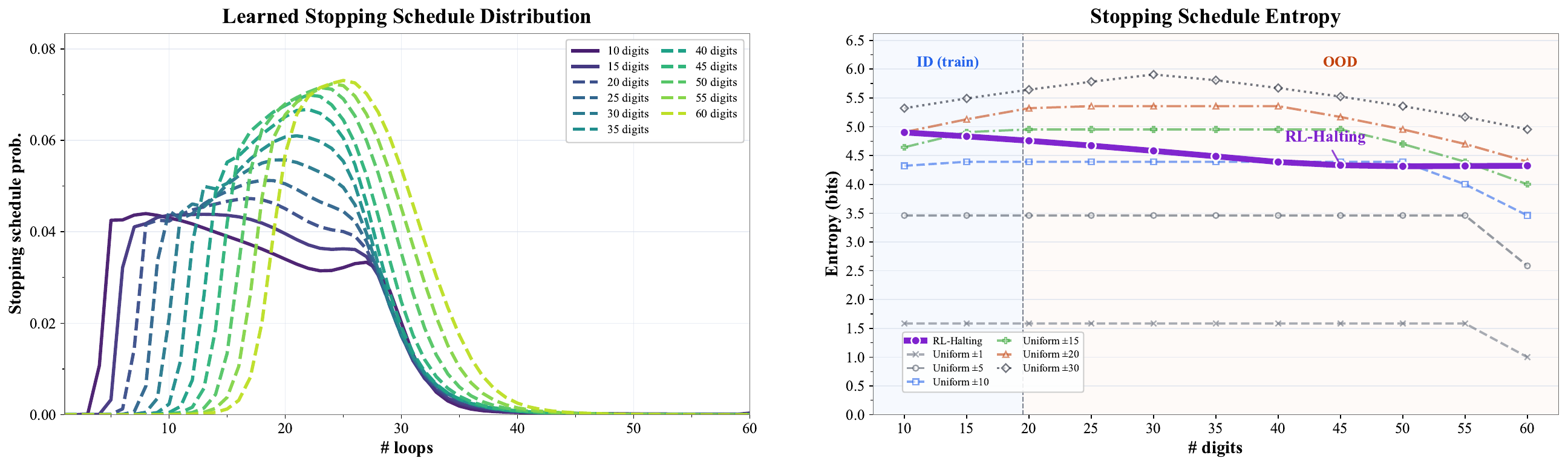}
  \caption{
  \textbf{Learned stopping schedule and stopping entropy on binary addition.}
  For this visualisation, the stopping distribution is truncated at $T_{\mathrm{vis}}=60$.
  \textbf{Left}: learned stopping distributions $\pi_{\mathrm{stop},\phi}^{(T_{\mathrm{vis}})}(\tau=k\mid x)$ for RL-Halting, averaged over binary-addition examples with the same digit length. 
  As the number of digits increases, the probability mass shifts toward larger loop depths, indicating that the learned policy adapts its stopping distribution to problem length. 
  \textbf{Right}: conditional stopping entropy as a function of digit length, compared with uniform local-window schedules. 
  RL-Halting remains stochastic across both ID and OOD digit lengths, but has lower entropy than wide uniform schedules, suggesting that it learns a structured stochastic schedule rather than a maximally diffuse one.
  }
  \label{fig:learned-stopping-schedule-entropy-addition}
\end{figure}

\cref{fig:learned-stopping-schedule-entropy-addition} shows that RL-Halting learns a structured length-dependent distribution on binary addition. 
For short inputs, the distribution places substantial mass on smaller loop counts. 
As the number of digits increases, the mass shifts smoothly to the right, assigning higher probability to larger loop counts. 
Thus, the learned stopping rule is not simply a fixed-depth policy: it adapts the range of likely stopping times to the input length.

At the same time, the learned schedule does not collapse to a deterministic stopping depth. 
The entropy curve remains clearly above zero across both ID and OOD lengths, showing that RL-Halting continues to sample from a non-degenerate distribution. 
However, its entropy is lower than that of wide uniform schedules around the input length. 
This suggests that RL-Halting learns a middle ground: it preserves stochasticity over loop depths, but concentrates probability on a length-dependent range of useful depths.

This behaviour is consistent with the role of RL-Halting in the binary-addition experiments. 
Unlike deterministic schedules, it does not force every update to use a single prescribed depth. 
Unlike hand-designed random schedules, it does not spread probability uniformly over a fixed window. 
Instead, it learns an adaptive stochastic schedule whose support shifts with digit length while maintaining moderate entropy. 
This provides a possible explanation for why RL-Halting reduces run-to-run OOD variability on addition while improving the accuracy--stability trade-off.
\subsection{Comparison with PonderNet-style halting}
\label{app:pondernet-comparison}

We also include a PonderNet-style learned halting baseline in \cref{tab:fixed-k-random-ablation}. 
PonderNet and RL-Halting use a similar hazard-based parameterisation of the stopping distribution, but differ in how this distribution is trained. 
A PonderNet-style objective optimises a distribution-weighted loss over loop depths, so depths with larger stopping probability also receive larger effective training weight. 
By contrast, RL-Halting samples a single stopping depth for each example, supervises the model only at that realised depth, and updates the stopping head with a policy-gradient estimator using the negative task loss as reward.

In our binary-addition experiments, the PonderNet-style baseline is substantially less stable than RL-Halting. 
As shown in \cref{tab:fixed-k-random-ablation}, PonderNet obtains lower OOD accuracy and a higher standard deviation across runs, while RL-Halting achieves both the best OOD accuracy and the lowest variance. 
Empirically, we find that the PonderNet-style objective is sensitive to data order and often collapses toward a nearly deterministic halting distribution, even with entropy regularisation coefficient $0.01$. 

This is consistent with the broader observation that learned halting or gating mechanisms require explicit regularisation to avoid degenerate stopping behaviour: PonderNet uses a KL prior partly to assign non-zero probability to all halting steps and promote exploration, while recent learned-depth models also report collapse or instability in learned gates \citep{banino_pondernet_2021,zhu_scaling_2025}.

One possible explanation is a self-reinforcing feedback loop: early batches can assign slightly higher stopping probability to some depths, those depths then receive larger effective loss weight, and the stopping distribution further concentrates around them.
% RL-Halting avoids this dense distribution-weighted supervision and instead preserves the sampled-depth training protocol, which may explain why it behaves more like a stable adaptive stochastic schedule in this setting.

\section{Toy Analysis for Randomised Loop Schedules}
\label{app:randomized-schedule-theory}

This appendix formalises the gradient-averaging intuition used in \cref{sec:toy-gradient-averaging}. 
The result should be read as a local mechanism for stabilisation, not as a guarantee that randomisation always improves extrapolation.

Let $z=(x,y)$ be a training example and let $L_k(\theta;z)$ denote the loss after running the looped model for $k$ iterations. 
Assume $L_k$ is differentiable in $\theta$, and define the depth-$k$ gradient
$
G_k(\theta;z):=\nabla_\theta L_k(\theta;z).
$
For a randomised schedule $\pi$ supported on a finite set of loop depths $S\subseteq\{1,\dots,T\}$, define the schedule-averaged objective
\begin{align}
J_\pi(\theta;z)
:=
\mathbb{E}_{K\sim \pi}\left[L_K(\theta;z)\right]
=
\sum_{k\in S}\pi(k)L_k(\theta;z).
\end{align}

Its gradient is
\begin{align}
G_\pi(\theta;z)
:=
\nabla_\theta J_\pi(\theta;z)
=
\sum_{k\in S}\pi(k)G_k(\theta;z).
\end{align}

Equivalently, if one samples $K\sim\pi$ at each update and uses the single-depth gradient $G_K(\theta;z)$, then
\begin{align}
\mathbb{E}_{K\sim\pi}\left[G_K(\theta;z)\right]
=
G_\pi(\theta;z).
\end{align}

Thus, sampling one loop count per update gives an unbiased estimator of the gradient of the schedule-averaged objective.

\begin{assumption}[Local shared-gradient model]
\label{assump:local-shared-gradient}
Fix $\theta$ and $z=(x,y)$. 
For loop depths $k\in S$, assume that the gradients admit the decomposition
\begin{align}
G_k(\theta;z)
=
\bar G(\theta;z)+\xi_k(\theta;z),
\end{align}

where $\bar G(\theta;z)$ is a shared gradient component and $\xi_k(\theta;z)$ is a depth-specific residual. 
Assume there exist constants $B>0$ and $\rho\in[0,1]$ such that
\begin{align}
\|\xi_k(\theta;z)\|^2\leq B^2
\qquad \text{for all } k\in S,
\end{align}

and
\begin{align}
\langle \xi_i(\theta;z),\xi_j(\theta;z)\rangle
\leq
\rho B^2
\qquad \text{for all } i\neq j,\; i,j\in S.
\end{align}

\end{assumption}

\begin{proposition}[Randomised schedules average depth-specific gradients]
\label{prop:gradient-averaging}
Under Assumption~\ref{assump:local-shared-gradient}, the schedule-averaged gradient satisfies
\begin{align}
\left\|
G_\pi(\theta;z)-\bar G(\theta;z)
\right\|^2
\leq
B^2
\left[
\rho+(1-\rho)\sum_{k\in S}\pi(k)^2
\right].
\end{align}

In particular, if $\pi$ is uniform on $S$ and $m=|S|$, then
\begin{align}
\left\|
G_\pi(\theta;z)-\bar G(\theta;z)
\right\|^2
\leq
B^2
\left(
\rho+\frac{1-\rho}{m}
\right).
\end{align}

Thus, when the depth-specific residuals are weakly aligned across loop depths, increasing the number of loop depths in the randomised window reduces the remaining depth-specific component of the expected gradient.
\end{proposition}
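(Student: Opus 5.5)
Since $\pi$ is a probability distribution on $S$, $\sum_{k\in S}\pi(k)=1$. I will use this to rewrite the deviation as a convex combination of residuals,
\begin{align}
G_\pi(\theta;z)-\bar G(\theta;z)=\sum_{k\in S}\pi(k)\bigl(\bar G+\xi_k\bigr)-\bar G=\sum_{k\in S}\pi(k)\,\xi_k(\theta;z).
\end{align}
The shared component cancels exactly. Everything then reduces to bounding the squared norm of a weighted sum of the residuals from Assumption~\ref{assump:local-shared-gradient}.

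Next I expand the squared norm into diagonal and off-diagonal parts:
\begin{align}
\Bigl\|\sum_{k}\pi(k)\xi_k\Bigr\|^2=\sum_{k}\pi(k)^2\|\xi_k\|^2+\sum_{i\neq j}\pi(i)\pi(j)\langle\xi_i,\xi_j\rangle .
\end{align}
The diagonal terms are bounded by $B^2\sum_k\pi(k)^2$ using $\|\xi_k\|^2\le B^2$. The off-diagonal terms are bounded by $\rho B^2\sum_{i\neq j}\pi(i)\pi(j)$, using the alignment bound together with $\pi(i)\pi(j)\ge 0$. This nonnegativity is what allows a one-sided upper bound on the inner products; no absolute value is needed.

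Then I use $\sum_{i\neq j}\pi(i)\pi(j)=\bigl(\sum_k\pi(k)\bigr)^2-\sum_k\pi(k)^2=1-\sum_k\pi(k)^2$. This gives
\begin{align}
B^2\Bigl[\sum_k\pi(k)^2+\rho\Bigl(1-\sum_k\pi(k)^2\Bigr)\Bigr]=B^2\Bigl[\rho+(1-\rho)\sum_k\pi(k)^2\Bigr],
\end{align}
which is the claimed bound. For the uniform case, substituting $\pi(k)=1/m$ gives $\sum_k\pi(k)^2=1/m$ and yields the second display. The closing remark follows because $1-\rho\ge 0$, so the bound decreases in $m$.

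There is no real obstacle here. The only points needing care are that the cancellation of $\bar G$ uses normalisation of $\pi$, and that the sign of the off-diagonal weights must be nonnegative for the one-sided bound $\langle\xi_i,\xi_j\rangle\le\rho B^2$ to apply.
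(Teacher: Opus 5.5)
Your proposal is correct and matches the paper's proof step for step. Like the paper, you cancel $\bar G$ via $\sum_{k\in S}\pi(k)=1$, expand $\|\sum_k\pi(k)\xi_k\|^2$ into diagonal and off-diagonal terms bounded by $B^2$ and $\rho B^2$, use $\sum_{i\neq j}\pi(i)\pi(j)=1-\sum_k\pi(k)^2$, and then specialise to $\pi(k)=1/m$. Your explicit remarks on the nonnegativity of $\pi(i)\pi(j)$ and on $1-\rho\ge 0$ spell out two points the paper uses only implicitly.
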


\begin{proof}
Using the decomposition $G_k=\bar G+\xi_k$ and $\sum_{k\in S}\pi(k)=1$, we have
\begin{align}
G_\pi-\bar G
=
\sum_{k\in S}\pi(k)G_k-\bar G
=
\sum_{k\in S}\pi(k)(\bar G+\xi_k)-\bar G
=
\sum_{k\in S}\pi(k)\xi_k.
\end{align}

Therefore,
\begin{align}
\left\|G_\pi-\bar G\right\|^2
=
\left\|
\sum_{k\in S}\pi(k)\xi_k
\right\|^2.
\end{align}

Expanding the squared norm gives
\begin{align}
\left\|G_\pi-\bar G\right\|^2
=
\sum_{k\in S}\pi(k)^2\|\xi_k\|^2
+
\sum_{\substack{i,j\in S\\ i\neq j}}
\pi(i)\pi(j)\langle \xi_i,\xi_j\rangle.
\end{align}

By Assumption~\ref{assump:local-shared-gradient},
\begin{align}
\left\|G_\pi-\bar G\right\|^2
\leq
B^2\sum_{k\in S}\pi(k)^2
+
\rho B^2
\sum_{\substack{i,j\in S\\ i\neq j}}
\pi(i)\pi(j).
\end{align}

Since
\begin{align}
\sum_{\substack{i,j\in S\\ i\neq j}}
\pi(i)\pi(j)
=
\left(\sum_{k\in S}\pi(k)\right)^2
-
\sum_{k\in S}\pi(k)^2
=
1-\sum_{k\in S}\pi(k)^2,
\end{align}
we obtain
\begin{align}
\left\|G_\pi-\bar G\right\|^2
\leq
B^2\sum_{k\in S}\pi(k)^2
+
\rho B^2
\left(
1-\sum_{k\in S}\pi(k)^2
\right).
\end{align}

Rearranging gives
\begin{align}
\left\|G_\pi-\bar G\right\|^2
\leq
B^2
\left[
\rho+(1-\rho)\sum_{k\in S}\pi(k)^2
\right].
\end{align}

If $\pi$ is uniform on $S$ and $m=|S|$, then $\pi(k)=1/m$ for all $k\in S$, so
\begin{align}
\sum_{k\in S}\pi(k)^2
=
m\cdot \frac{1}{m^2}
=
\frac{1}{m}.
\end{align}

Substituting this into the bound gives
\begin{align}
\left\|
G_\pi-\bar G
\right\|^2
\leq
B^2
\left(
\rho+\frac{1-\rho}{m}
\right).
\end{align}

\end{proof}

\paragraph{Interpretation.}
For the schedules used in \cref{sec:stochastic-schedules}, $S$ corresponds to a window of nearby loop depths around the input length, e.g.
$
K\sim \#\text{digits}+\mathrm{Unif}(-w,w).
$
The proposition shows that if nearby depths share a common useful gradient component and differ mainly through weakly aligned depth-specific residuals, then averaging over a larger window suppresses the depth-specific part of the expected gradient. 
This provides a possible mechanism for reduced OOD variance across runs.

The assumption is intentionally local. 
It is not expected to hold for arbitrary loop depths or arbitrarily large windows. 
Moreover, the proposition only explains stabilisation of the expected gradient around $\bar G$; it does not imply that $\bar G$ is an extrapolating direction. 
Therefore, lower variance under stochastic schedules need not imply higher mean OOD accuracy.
\section{Toy Analysis for Loop-Consistent Representations}
\label{app:loop-consistency-theory}

This appendix formalises the intuition that randomised schedules can encourage hidden states at different loop depths to become consistent. 
The result is a sufficient-condition argument: it does not claim that low loss always identifies a unique hidden representation, but shows that if successful predictions at different depths require closeness to a shared representation, then training across multiple depths forces those states to be close to one another.

Let $H_k^\theta(x)\in\mathbb{R}^{n\times d}$ be the full hidden state after $k$ loop iterations, and let
$
L_k(\theta;x,y)=\ell(\mathrm{Readout}(H_k^\theta(x)),y)
$
be the prediction loss at depth $k$. 
Let $S\subseteq\{1,\dots,T\}$ be a finite set of loop depths, and consider the randomised-depth objective
\begin{align}
J_S(\theta)
=
\mathbb{E}_{(x,y)\sim\mathcal D}
\left[
\frac{1}{|S|}
\sum_{k\in S} L_k(\theta;x,y)
\right].
\end{align}

\begin{assumption}[Shared target representation]
\label{assump:shared-target-representation}
For every input $x$, there exists a representation $Z^\star(x)\in\mathbb{R}^{n\times d}$ and a constant $\mu>0$ such that, for every $k\in S$,
\begin{align}
L_k(\theta;x,y)
\geq
\frac{\mu}{2}
\|H_k^\theta(x)-Z^\star(x)\|_F^2 .
\end{align}
\end{assumption}

\begin{proposition}[Randomised schedules encourage loop-consistent states]
\label{prop:loop-consistency}
Under Assumption~\ref{assump:shared-target-representation}, if
$
J_S(\theta)\leq \varepsilon,
$
then
\begin{align}
\mathbb{E}_{(x,y)\sim\mathcal D}
\left[
\frac{1}{|S|^2}
\sum_{i,j\in S}
\|H_i^\theta(x)-H_j^\theta(x)\|_F^2
\right]
\leq
\frac{8\varepsilon}{\mu}.
\end{align}
\end{proposition}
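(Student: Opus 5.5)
The plan is to compare every pair of hidden states $H_i^\theta(x)$ and $H_j^\theta(x)$ to the shared anchor $Z^\star(x)$ given by Assumption~\ref{assump:shared-target-representation}, and then convert the per-depth lower bound on the loss into an upper bound on those distances using $J_S(\theta)\le\varepsilon$. Fix an input $x$ and abbreviate $H_k=H_k^\theta(x)$, $Z^\star=Z^\star(x)$, and $D_k=\|H_k-Z^\star\|_F^2$.

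First, for each pair $i,j\in S$, the triangle inequality together with $(a+b)^2\le 2a^2+2b^2$ gives
\begin{align}
\|H_i-H_j\|_F^2\le 2D_i+2D_j .
\end{align}
Averaging over all ordered pairs $(i,j)\in S\times S$ with weight $1/|S|^2$, the two terms contribute symmetrically, which yields
\begin{align}
\frac{1}{|S|^2}\sum_{i,j\in S}\|H_i-H_j\|_F^2\le \frac{4}{|S|}\sum_{k\in S}D_k .
\end{align}

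Second, Assumption~\ref{assump:shared-target-representation} gives $D_k\le \tfrac{2}{\mu}L_k(\theta;x,y)$ for every $k\in S$. Substituting this into the previous display bounds the pairwise average by
\begin{align}
\frac{8}{\mu}\cdot\frac{1}{|S|}\sum_{k\in S}L_k(\theta;x,y).
\end{align}

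Third, take expectation over $(x,y)\sim\mathcal D$. By linearity, the right-hand side becomes $\tfrac{8}{\mu}J_S(\theta)$, and $J_S(\theta)\le\varepsilon$ then gives the claimed bound $8\varepsilon/\mu$. Linearity applies here because every quantity involved is nonnegative.

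There is no real obstacle in this argument. The only point that needs care is the constant. Using the ordered-pair average with the crude bound $2D_i+2D_j$ produces exactly $8/\mu$. A sharper route would use the identity $\frac{1}{|S|^2}\sum_{i,j}\|H_i-H_j\|^2=2\cdot\frac{1}{|S|}\sum_k\|H_k-\bar H\|^2\le 2\cdot\frac{1}{|S|}\sum_k D_k$, where $\bar H$ is the mean state; this would give $4/\mu$. Either route establishes the stated inequality.
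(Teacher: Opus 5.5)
Your proof is correct and follows the paper's argument: you anchor each pair at $Z^\star(x)$, use $\|H_i-H_j\|_F^2\le 2D_i+2D_j$, average over ordered pairs to get $\frac{4}{|S|}\sum_k D_k$, and then convert this bound using the assumption and $J_S(\theta)\le\varepsilon$. The only difference is that the paper applies the assumption before the pairwise step rather than after, and your variance-identity remark, which tightens the constant to $4\varepsilon/\mu$, is also correct.
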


\begin{proof}
By Assumption~\ref{assump:shared-target-representation},
\begin{align}
J_S(\theta)
=
\mathbb{E}_{(x,y)}
\left[
\frac{1}{|S|}
\sum_{k\in S}L_k(\theta;x,y)
\right]
\geq
\frac{\mu}{2}
\mathbb{E}_{(x,y)}
\left[
\frac{1}{|S|}
\sum_{k\in S}
\|H_k^\theta(x)-Z^\star(x)\|_F^2
\right].
\end{align}

Therefore, if $J_S(\theta)\leq \varepsilon$, then
\begin{align}
\mathbb{E}_{(x,y)}
\left[
\frac{1}{|S|}
\sum_{k\in S}
\|H_k^\theta(x)-Z^\star(x)\|_F^2
\right]
\leq
\frac{2\varepsilon}{\mu}.
\end{align}

For any $i,j\in S$,
\begin{align}
\|H_i^\theta(x)-H_j^\theta(x)\|_F^2
=
\|(H_i^\theta(x)-Z^\star(x))-(H_j^\theta(x)-Z^\star(x))\|_F^2
\end{align}

and therefore
\begin{align}
\|H_i^\theta(x)-H_j^\theta(x)\|_F^2
\leq
2\|H_i^\theta(x)-Z^\star(x)\|_F^2
+
2\|H_j^\theta(x)-Z^\star(x)\|_F^2 .
\end{align}

Averaging over all pairs $(i,j)\in S\times S$ gives
\begin{align}
\frac{1}{|S|^2}
\sum_{i,j\in S}
\|H_i^\theta(x)-H_j^\theta(x)\|_F^2
\leq
\frac{4}{|S|}
\sum_{k\in S}
\|H_k^\theta(x)-Z^\star(x)\|_F^2 .
\end{align}

Taking expectation over $(x,y)\sim\mathcal D$ and using the previous bound yields
\begin{align}
\mathbb{E}_{(x,y)}
\left[
\frac{1}{|S|^2}
\sum_{i,j\in S}
\|H_i^\theta(x)-H_j^\theta(x)\|_F^2
\right]
\leq
4\cdot \frac{2\varepsilon}{\mu}
=
\frac{8\varepsilon}{\mu}.
\end{align}

\end{proof}

\paragraph{Interpretation.}
The proposition shows that, under a shared-representation assumption, low loss across a stochastic window implies consistency of hidden states within that window. 
This provides a toy explanation for why stochastic schedules can reduce prediction changes across adjacent loop counts. 
The assumption is intentionally strong: cross-entropy loss does not generally identify a unique hidden representation. 
The result should therefore be interpreted as a sufficient mechanism, not as a universal guarantee.

%%%%%%%%%%%%%%%%%%%%%%%%%%%%%%%%%%%%%%%%%%%%%%%%%%%%%%%%%%%%

\newpage
\input{checklist.tex}

\end{document}

%% file: checklist.tex
\section*{NeurIPS Paper Checklist}

\begin{enumerate}

\item {\bf Claims}
    \item[] Question: Do the main claims made in the abstract and introduction accurately reflect the paper's contributions and scope?
    \item[] Answer: \answerYes{} % Replace by \answerYes{}, \answerNo{}, or \answerNA{}.
    \item[] Justification: The abstract and introduction state the main contributions: identifying the extrapolation--stability gap in Looped Transformers, showing stabilisation via stochastic schedules, and studying RL-Halting as a learned stochastic stopping rule. These claims are supported by the experiments in \cref{sec:looping-enable,sec:stochastic-schedules,sec:loss_rl_halting} and are stated with the caveat that stability does not always imply extrapolation.
    \item[] Guidelines:
    \begin{itemize}
        \item The answer \answerNA{} means that the abstract and introduction do not include the claims made in the paper.
        \item The abstract and/or introduction should clearly state the claims made, including the contributions made in the paper and important assumptions and limitations. A \answerNo{} or \answerNA{} answer to this question will not be perceived well by the reviewers. 
        \item The claims made should match theoretical and experimental results, and reflect how much the results can be expected to generalize to other settings. 
        \item It is fine to include aspirational goals as motivation as long as it is clear that these goals are not attained by the paper. 
    \end{itemize}

\item {\bf Limitations}
    \item[] Question: Does the paper discuss the limitations of the work performed by the authors?
    \item[] Answer: \answerYes{} % Replace by \answerYes{}, \answerNo{}, or \answerNA{}.
    \item[] Justification: The paper discusses several limitations: stochastic schedules can stabilise suboptimal computations, RL-Halting underperforms on Copy, and the toy analyses explain stabilisation rather than guaranteeing extrapolation. These limitations are discussed in \cref{sec:stochastic-schedules,sec:loss_rl_halting,sec:toy-analysis,app:randomized-schedule-theory,app:loop-consistency-theory}.

    \item[] Guidelines:
    \begin{itemize}
        \item The answer \answerNA{} means that the paper has no limitation while the answer \answerNo{} means that the paper has limitations, but those are not discussed in the paper. 
        \item The authors are encouraged to create a separate ``Limitations'' section in their paper.
        \item The paper should point out any strong assumptions and how robust the results are to violations of these assumptions (e.g., independence assumptions, noiseless settings, model well-specification, asymptotic approximations only holding locally). The authors should reflect on how these assumptions might be violated in practice and what the implications would be.
        \item The authors should reflect on the scope of the claims made, e.g., if the approach was only tested on a few datasets or with a few runs. In general, empirical results often depend on implicit assumptions, which should be articulated.
        \item The authors should reflect on the factors that influence the performance of the approach. For example, a facial recognition algorithm may perform poorly when image resolution is low or images are taken in low lighting. Or a speech-to-text system might not be used reliably to provide closed captions for online lectures because it fails to handle technical jargon.
        \item The authors should discuss the computational efficiency of the proposed algorithms and how they scale with dataset size.
        \item If applicable, the authors should discuss possible limitations of their approach to address problems of privacy and fairness.
        \item While the authors might fear that complete honesty about limitations might be used by reviewers as grounds for rejection, a worse outcome might be that reviewers discover limitations that aren't acknowledged in the paper. The authors should use their best judgment and recognize that individual actions in favor of transparency play an important role in developing norms that preserve the integrity of the community. Reviewers will be specifically instructed to not penalize honesty concerning limitations.
    \end{itemize}

\item {\bf Theory assumptions and proofs}
    \item[] Question: For each theoretical result, does the paper provide the full set of assumptions and a complete (and correct) proof?
    \item[] Answer: \answerYes{} % Replace by \answerYes{}, \answerNo{}, or \answerNA{}.
    \item[] Justification: The theoretical statements are presented as toy mechanisms with explicit assumptions and complete proofs. The assumptions, propositions, proofs, and interpretation are given in \cref{app:randomized-schedule-theory,app:loop-consistency-theory}.
    \item[] Guidelines:
    \begin{itemize}
        \item The answer \answerNA{} means that the paper does not include theoretical results. 
        \item All the theorems, formulas, and proofs in the paper should be numbered and cross-referenced.
        \item All assumptions should be clearly stated or referenced in the statement of any theorems.
        \item The proofs can either appear in the main paper or the supplemental material, but if they appear in the supplemental material, the authors are encouraged to provide a short proof sketch to provide intuition. 
        \item Inversely, any informal proof provided in the core of the paper should be complemented by formal proofs provided in appendix or supplemental material.
        \item Theorems and Lemmas that the proof relies upon should be properly referenced. 
    \end{itemize}

    \item {\bf Experimental result reproducibility}
    \item[] Question: Does the paper fully disclose all the information needed to reproduce the main experimental results of the paper to the extent that it affects the main claims and/or conclusions of the paper (regardless of whether the code and data are provided or not)?
    \item[] Answer: \answerYes{} % Replace by \answerYes{}, \answerNo{}, or \answerNA{}.
    \item[] Justification: The paper describes the tasks, train--test length split, model families, stopping schedules, metrics, and implementation details needed to reproduce the main results. These details are provided in \cref{sec:setup,sec:tasks-protocol,sec:models-schedules,sec:metrics,app:implementation-details,app:rl-halting}.

    \item[] Guidelines:
    \begin{itemize}
        \item The answer \answerNA{} means that the paper does not include experiments.
        \item If the paper includes experiments, a \answerNo{} answer to this question will not be perceived well by the reviewers: Making the paper reproducible is important, regardless of whether the code and data are provided or not.
        \item If the contribution is a dataset and\slash or model, the authors should describe the steps taken to make their results reproducible or verifiable. 
        \item Depending on the contribution, reproducibility can be accomplished in various ways. For example, if the contribution is a novel architecture, describing the architecture fully might suffice, or if the contribution is a specific model and empirical evaluation, it may be necessary to either make it possible for others to replicate the model with the same dataset, or provide access to the model. In general. releasing code and data is often one good way to accomplish this, but reproducibility can also be provided via detailed instructions for how to replicate the results, access to a hosted model (e.g., in the case of a large language model), releasing of a model checkpoint, or other means that are appropriate to the research performed.
        \item While NeurIPS does not require releasing code, the conference does require all submissions to provide some reasonable avenue for reproducibility, which may depend on the nature of the contribution. For example
        \begin{enumerate}
            \item If the contribution is primarily a new algorithm, the paper should make it clear how to reproduce that algorithm.
            \item If the contribution is primarily a new model architecture, the paper should describe the architecture clearly and fully.
            \item If the contribution is a new model (e.g., a large language model), then there should either be a way to access this model for reproducing the results or a way to reproduce the model (e.g., with an open-source dataset or instructions for how to construct the dataset).
            \item We recognize that reproducibility may be tricky in some cases, in which case authors are welcome to describe the particular way they provide for reproducibility. In the case of closed-source models, it may be that access to the model is limited in some way (e.g., to registered users), but it should be possible for other researchers to have some path to reproducing or verifying the results.
        \end{enumerate}
    \end{itemize}

\item {\bf Open access to data and code}
    \item[] Question: Does the paper provide open access to the data and code, with sufficient instructions to faithfully reproduce the main experimental results, as described in supplemental material?
    \item[] Answer: \answerNo{} % Replace by \answerYes{}, \answerNo{}, or \answerNA{}.
    \item[] Justification: The current submission does not provide an anonymised public code release or reproduction scripts. However, the experimental setup, implementation details, stopping schedules, and RL-Halting objective are described in \cref{sec:setup,app:implementation-details,app:rl-halting}.

    \item[] Guidelines:
    \begin{itemize}
        \item The answer \answerNA{} means that paper does not include experiments requiring code.
        \item Please see the NeurIPS code and data submission guidelines (\url{https://neurips.cc/public/guides/CodeSubmissionPolicy}) for more details.
        \item While we encourage the release of code and data, we understand that this might not be possible, so \answerNo{} is an acceptable answer. Papers cannot be rejected simply for not including code, unless this is central to the contribution (e.g., for a new open-source benchmark).
        \item The instructions should contain the exact command and environment needed to run to reproduce the results. See the NeurIPS code and data submission guidelines (\url{https://neurips.cc/public/guides/CodeSubmissionPolicy}) for more details.
        \item The authors should provide instructions on data access and preparation, including how to access the raw data, preprocessed data, intermediate data, and generated data, etc.
        \item The authors should provide scripts to reproduce all experimental results for the new proposed method and baselines. If only a subset of experiments are reproducible, they should state which ones are omitted from the script and why.
        \item At submission time, to preserve anonymity, the authors should release anonymized versions (if applicable).
        \item Providing as much information as possible in supplemental material (appended to the paper) is recommended, but including URLs to data and code is permitted.
    \end{itemize}

\item {\bf Experimental setting/details}
    \item[] Question: Does the paper specify all the training and test details (e.g., data splits, hyperparameters, how they were chosen, type of optimizer) necessary to understand the results?
    \item[] Answer: \answerYes{} % Replace by \answerYes{}, \answerNo{}, or \answerNA{}.
    \item[] Justification: The paper specifies the task definitions, ID/OOD length regimes, model baselines, stopping schedules, evaluation metrics, optimiser, batch size, learning rate schedule, training steps, and number of independent runs. These details are given in \cref{sec:tasks-protocol,sec:models-schedules,sec:metrics,app:implementation-details}.

    \item[] Guidelines:
    \begin{itemize}
        \item The answer \answerNA{} means that the paper does not include experiments.
        \item The experimental setting should be presented in the core of the paper to a level of detail that is necessary to appreciate the results and make sense of them.
        \item The full details can be provided either with the code, in appendix, or as supplemental material.
    \end{itemize}

\item {\bf Experiment statistical significance}
    \item[] Question: Does the paper report error bars suitably and correctly defined or other appropriate information about the statistical significance of the experiments?
    \item[] Answer: \answerYes{} % Replace by \answerYes{}, \answerNo{}, or \answerNA{}.
    \item[] Justification: The paper reports variability across independent runs using individual-run curves, standard deviations, min--max ranges, and controlled variance experiments. These results are shown in \cref{fig:accuracy-runs-comparison,fig:random-window-ood-summary,tab:ood_generalization,fig:ood-variance-comparison}.

    \item[] Guidelines:
    \begin{itemize}
        \item The answer \answerNA{} means that the paper does not include experiments.
        \item The authors should answer \answerYes{} if the results are accompanied by error bars, confidence intervals, or statistical significance tests, at least for the experiments that support the main claims of the paper.
        \item The factors of variability that the error bars are capturing should be clearly stated (for example, train/test split, initialization, random drawing of some parameter, or overall run with given experimental conditions).
        \item The method for calculating the error bars should be explained (closed form formula, call to a library function, bootstrap, etc.)
        \item The assumptions made should be given (e.g., Normally distributed errors).
        \item It should be clear whether the error bar is the standard deviation or the standard error of the mean.
        \item It is OK to report 1-sigma error bars, but one should state it. The authors should preferably report a 2-sigma error bar than state that they have a 96\% CI, if the hypothesis of Normality of errors is not verified.
        \item For asymmetric distributions, the authors should be careful not to show in tables or figures symmetric error bars that would yield results that are out of range (e.g., negative error rates).
        \item If error bars are reported in tables or plots, the authors should explain in the text how they were calculated and reference the corresponding figures or tables in the text.
    \end{itemize}

\item {\bf Experiments compute resources}
    \item[] Question: For each experiment, does the paper provide sufficient information on the computer resources (type of compute workers, memory, time of execution) needed to reproduce the experiments?
    \item[] Answer: \answerYes{} % Replace by \answerYes{}, \answerNo{}, or \answerNA{}.
    \item[] Justification: The paper reports the hardware used, approximate per-run wall-clock time, and number of independent seeds. These compute details are provided in \cref{app:implementation-details}.
    \item[] Guidelines:
    \begin{itemize}
        \item The answer \answerNA{} means that the paper does not include experiments.
        \item The paper should indicate the type of compute workers CPU or GPU, internal cluster, or cloud provider, including relevant memory and storage.
        \item The paper should provide the amount of compute required for each of the individual experimental runs as well as estimate the total compute. 
        \item The paper should disclose whether the full research project required more compute than the experiments reported in the paper (e.g., preliminary or failed experiments that didn't make it into the paper). 
    \end{itemize}
    
\item {\bf Code of ethics}
    \item[] Question: Does the research conducted in the paper conform, in every respect, with the NeurIPS Code of Ethics \url{https://neurips.cc/public/EthicsGuidelines}?
    \item[] Answer:  \answerYes{} % Replace by \answerYes{}, \answerNo{}, or \answerNA{}.
    \item[] Justification: The research uses controlled synthetic algorithmic sequence tasks and does not involve human subjects, private data, deception, harmful data collection, or deployment of high-risk systems. The task setup is described in \cref{sec:tasks-protocol}.

    \item[] Guidelines:
    \begin{itemize}
        \item The answer \answerNA{} means that the authors have not reviewed the NeurIPS Code of Ethics.
        \item If the authors answer \answerNo, they should explain the special circumstances that require a deviation from the Code of Ethics.
        \item The authors should make sure to preserve anonymity (e.g., if there is a special consideration due to laws or regulations in their jurisdiction).
    \end{itemize}

\item {\bf Broader impacts}
    \item[] Question: Does the paper discuss both potential positive societal impacts and negative societal impacts of the work performed?
    \item[] Answer: \answerNo{} % Replace by \answerYes{}, \answerNo{}, or \answerNA{}.
    \item[] Justification: The paper does not include a dedicated broader-impact discussion. The work is primarily foundational and studies controlled synthetic sequence tasks, as described in \cref{sec:tasks-protocol}.

    \item[] Guidelines:
    \begin{itemize}
        \item The answer \answerNA{} means that there is no societal impact of the work performed.
        \item If the authors answer \answerNA{} or \answerNo, they should explain why their work has no societal impact or why the paper does not address societal impact.
        \item Examples of negative societal impacts include potential malicious or unintended uses (e.g., disinformation, generating fake profiles, surveillance), fairness considerations (e.g., deployment of technologies that could make decisions that unfairly impact specific groups), privacy considerations, and security considerations.
        \item The conference expects that many papers will be foundational research and not tied to particular applications, let alone deployments. However, if there is a direct path to any negative applications, the authors should point it out. For example, it is legitimate to point out that an improvement in the quality of generative models could be used to generate Deepfakes for disinformation. On the other hand, it is not needed to point out that a generic algorithm for optimizing neural networks could enable people to train models that generate Deepfakes faster.
        \item The authors should consider possible harms that could arise when the technology is being used as intended and functioning correctly, harms that could arise when the technology is being used as intended but gives incorrect results, and harms following from (intentional or unintentional) misuse of the technology.
        \item If there are negative societal impacts, the authors could also discuss possible mitigation strategies (e.g., gated release of models, providing defenses in addition to attacks, mechanisms for monitoring misuse, mechanisms to monitor how a system learns from feedback over time, improving the efficiency and accessibility of ML).
    \end{itemize}
    
\item {\bf Safeguards}
    \item[] Question: Does the paper describe safeguards that have been put in place for responsible release of data or models that have a high risk for misuse (e.g., pre-trained language models, image generators, or scraped datasets)?
    \item[] Answer: \answerNA{} % Replace by \answerYes{}, \answerNo{}, or \answerNA{}.
    \item[] Justification: The paper does not release high-risk models, pretrained generative models, scraped datasets, or systems intended for deployment. The experiments use controlled synthetic algorithmic tasks described in \cref{sec:tasks-protocol}.

    \item[] Guidelines:
    \begin{itemize}
        \item The answer \answerNA{} means that the paper poses no such risks.
        \item Released models that have a high risk for misuse or dual-use should be released with necessary safeguards to allow for controlled use of the model, for example by requiring that users adhere to usage guidelines or restrictions to access the model or implementing safety filters. 
        \item Datasets that have been scraped from the Internet could pose safety risks. The authors should describe how they avoided releasing unsafe images.
        \item We recognize that providing effective safeguards is challenging, and many papers do not require this, but we encourage authors to take this into account and make a best faith effort.
    \end{itemize}

\item {\bf Licenses for existing assets}
    \item[] Question: Are the creators or original owners of assets (e.g., code, data, models), used in the paper, properly credited and are the license and terms of use explicitly mentioned and properly respected?
    \item[] Answer: \answerNA{} % Replace by \answerYes{}, \answerNo{}, or \answerNA{}.
    \item[] Justification: The paper does not use existing datasets or pretrained models as experimental inputs. The experiments are conducted on synthetic sequence tasks generated according to the task definitions in \cref{sec:tasks-protocol}.

    \item[] Guidelines:
    \begin{itemize}
        \item The answer \answerNA{} means that the paper does not use existing assets.
        \item The authors should cite the original paper that produced the code package or dataset.
        \item The authors should state which version of the asset is used and, if possible, include a URL.
        \item The name of the license (e.g., CC-BY 4.0) should be included for each asset.
        \item For scraped data from a particular source (e.g., website), the copyright and terms of service of that source should be provided.
        \item If assets are released, the license, copyright information, and terms of use in the package should be provided. For popular datasets, \url{paperswithcode.com/datasets} has curated licenses for some datasets. Their licensing guide can help determine the license of a dataset.
        \item For existing datasets that are re-packaged, both the original license and the license of the derived asset (if it has changed) should be provided.
        \item If this information is not available online, the authors are encouraged to reach out to the asset's creators.
    \end{itemize}

\item {\bf New assets}
    \item[] Question: Are new assets introduced in the paper well documented and is the documentation provided alongside the assets?
    \item[] Answer: \answerNA{} % Replace by \answerYes{}, \answerNo{}, or \answerNA{}.
    \item[] Justification: The paper does not introduce or release a new dataset, benchmark asset, pretrained model, or code package. The synthetic task definitions and evaluation protocol are described in \cref{sec:tasks-protocol,sec:metrics}.

    \item[] Guidelines:
    \begin{itemize}
        \item The answer \answerNA{} means that the paper does not release new assets.
        \item Researchers should communicate the details of the dataset\slash code\slash model as part of their submissions via structured templates. This includes details about training, license, limitations, etc. 
        \item The paper should discuss whether and how consent was obtained from people whose asset is used.
        \item At submission time, remember to anonymize your assets (if applicable). You can either create an anonymized URL or include an anonymized zip file.
    \end{itemize}

\item {\bf Crowdsourcing and research with human subjects}
    \item[] Question: For crowdsourcing experiments and research with human subjects, does the paper include the full text of instructions given to participants and screenshots, if applicable, as well as details about compensation (if any)? 
    \item[] Answer: \answerNA{} % Replace by \answerYes{}, \answerNo{}, or \answerNA{}.
    \item[] Justification: The paper does not involve crowdsourcing, user studies, or research with human subjects. All experiments are performed on synthetic algorithmic sequence tasks, as described in \cref{sec:tasks-protocol}.

    \item[] Guidelines:
    \begin{itemize}
        \item The answer \answerNA{} means that the paper does not involve crowdsourcing nor research with human subjects.
        \item Including this information in the supplemental material is fine, but if the main contribution of the paper involves human subjects, then as much detail as possible should be included in the main paper. 
        \item According to the NeurIPS Code of Ethics, workers involved in data collection, curation, or other labor should be paid at least the minimum wage in the country of the data collector. 
    \end{itemize}

\item {\bf Institutional review board (IRB) approvals or equivalent for research with human subjects}
    \item[] Question: Does the paper describe potential risks incurred by study participants, whether such risks were disclosed to the subjects, and whether Institutional Review Board (IRB) approvals (or an equivalent approval/review based on the requirements of your country or institution) were obtained?
    \item[] Answer: \answerNA{} % Replace by \answerYes{}, \answerNo{}, or \answerNA{}.
    \item[] Justification: The paper does not involve human subjects, crowdsourcing, or participant data collection. Therefore, institutional review board approval or equivalent human-subjects review is not applicable.

    \item[] Guidelines:
    \begin{itemize}
        \item The answer \answerNA{} means that the paper does not involve crowdsourcing nor research with human subjects.
        \item Depending on the country in which research is conducted, IRB approval (or equivalent) may be required for any human subjects research. If you obtained IRB approval, you should clearly state this in the paper. 
        \item We recognize that the procedures for this may vary significantly between institutions and locations, and we expect authors to adhere to the NeurIPS Code of Ethics and the guidelines for their institution. 
        \item For initial submissions, do not include any information that would break anonymity (if applicable), such as the institution conducting the review.
    \end{itemize}

\item {\bf Declaration of LLM usage}
    \item[] Question: Does the paper describe the usage of LLMs if it is an important, original, or non-standard component of the core methods in this research? Note that if the LLM is used only for writing, editing, or formatting purposes and does \emph{not} impact the core methodology, scientific rigor, or originality of the research, declaration is not required.
    %this research? 
    \item[] Answer: \answerNA{} % Replace by \answerYes{}, \answerNo{}, or \answerNA{}.
    \item[] Justification: LLMs are not used as an important, original, or non-standard component of the method or experiments. The core methodology consists of Looped Transformers, stochastic stopping schedules, and RL-Halting, described in \cref{sec:models-schedules,sec:loss_rl_halting,app:rl-halting}.
    \item[] Guidelines:
    \begin{itemize}
        \item The answer \answerNA{} means that the core method development in this research does not involve LLMs as any important, original, or non-standard components.
        \item Please refer to our LLM policy in the NeurIPS handbook for what should or should not be described.
    \end{itemize}

\end{enumerate}